\documentclass[10pt,twocolumn,letterpaper]{article}

\usepackage[pagenumbers]{cvpr}

\usepackage[most]{tcolorbox}
\usepackage{amsthm}
\usepackage{bm}
\usepackage{multirow}
\usepackage{algorithm}
\usepackage{algpseudocode}
\algtext*{EndIf}
\algtext*{EndFor}
\algtext*{EndWhile}
\usepackage{float}

\DeclareMathOperator*{\argmin}{argmin}

\DeclareMathOperator{\child}{chi}
\DeclareMathOperator{\parent}{par}
\DeclareMathOperator{\desc}{desc}
\DeclareMathOperator{\anc}{anc}

\DeclareMathOperator{\lev}{lev}

\newcommand{\cA}{\ensuremath{\mathcal{A}}}

\newcommand{\cC}{\ensuremath{\mathcal{C}}}

\newcommand{\cH}{\ensuremath{\mathcal{H}}}
\newcommand{\cI}{\ensuremath{\mathcal{I}}}

\newcommand{\cS}{\ensuremath{\mathcal{S}}}

\newcommand{\cV}{\ensuremath{\mathcal{V}}}

\newcommand{\bzero}{\ensuremath{\bm{0}}}

\newcommand{\bA}{\ensuremath{\bm{A}}}

\newcommand{\bD}{\ensuremath{\bm{D}}}

\newcommand{\bU}{\ensuremath{\bm{U}}}

\newcommand{\bW}{\ensuremath{\bm{W}}}

\newcommand{\ba}{\ensuremath{\bm{a}}}

\newcommand{\bd}{\ensuremath{\bm{d}}}
\newcommand{\be}{\ensuremath{\bm{e}}}

\newcommand{\bg}{\ensuremath{\bm{g}}}
\newcommand{\bh}{\ensuremath{\bm{h}}}

\newcommand{\br}{\ensuremath{\bm{r}}}
\newcommand{\bs}{\ensuremath{\bm{s}}}

\newcommand{\bw}{\ensuremath{\bm{w}}}
\newcommand{\bx}{\ensuremath{\bm{x}}}

\newcommand{\bz}{\ensuremath{\bm{z}}}

\def\st/{\textsuperscript{st}}
\def\nd/{\textsuperscript{nd}}
\def\rd/{\textsuperscript{rd}}
\def\th/{\textsuperscript{th}}

\newcommand{\R}{\mathbb{R}}

\usepackage{aliascnt}

\newaliascnt{proposition}{theorem}
\newtheorem{proposition}[proposition]{Proposition}
\aliascntresetthe{proposition}
\newaliascnt{lemma}{theorem}
\newtheorem{lemma}[lemma]{Lemma}
\aliascntresetthe{lemma}
\newaliascnt{corollary}{theorem}

\aliascntresetthe{corollary}
\theoremstyle{definition}
\newaliascnt{definition}{theorem}
\newtheorem{definition}[definition]{Definition}
\aliascntresetthe{definition}
\newaliascnt{assumption}{theorem}

\aliascntresetthe{assumption}
\theoremstyle{remark}
\newaliascnt{remark}{theorem}

\aliascntresetthe{remark}

\newcommand{\myparagraph}[1]{\smallskip\noindent\textbf{#1.}}

\definecolor{cvprblue}{rgb}{0.21,0.49,0.74}
\usepackage[pagebackref,breaklinks,colorlinks,allcolors=cvprblue]{hyperref}
\graphicspath{{./}}

\def\paperID{}
\def\confName{CVPR}
\def\confYear{2026}

\title{\centerline{Hierarchical Concept Embedding \& Pursuit for Interpretable Image Classification}}

\newcommand{\ourslong}{Hierarchical Concept Embedding \& Pursuit}
\newcommand{\ours}{HCEP}

\author{\textbf{Nghia Nguyen\thanks{Preprint. Correspondence to \href{mailto:nghianhh@seas.upenn.edu}{\nolinkurl{nghianhh@seas.upenn.edu}}}\quad Tianjiao Ding\quad René Vidal}\\
University of Pennsylvania\\
}

\begin{document}
\maketitle
\begin{abstract}
Interpretable-by-design models are gaining traction in computer vision because they provide faithful explanations for their predictions. In image classification, these models typically recover human-interpretable concepts from an image and use them for classification. Sparse concept recovery methods leverage the latent space of vision-language models to represent image embeddings as sparse combinations of concept embeddings. However, by ignoring the hierarchical structure of semantic concepts, these methods may produce correct predictions with explanations that are inconsistent with the hierarchy. In this work, we propose \ourslong{} (\ours{}), a framework that induces a hierarchy of concept embeddings in the latent space and performs hierarchical sparse coding to recover the concepts present in an image. Given a hierarchy of semantic concepts, we introduce a geometric construction for the corresponding hierarchy of embeddings. Under the assumption that the true concepts form a rooted path in the hierarchy, we derive sufficient conditions for their recovery in the embedding space. We further show that hierarchical sparse coding reliably recovers hierarchical concept embeddings, whereas standard sparse coding fails. Experiments on real-world datasets show that \ours{} improves concept precision and recall compared to existing methods while maintaining competitive classification accuracy. Moreover, when the number of samples available for concept estimation and classifier training is limited, \ours{} achieves superior classification accuracy and concept recovery. Our results demonstrate that incorporating hierarchical structure into sparse concept recovery leads to more faithful and interpretable image classification models.\footnote{Code at \url{https://github.com/nghiahhnguyen/hcep}.}

\end{abstract}

\section{Introduction}
Machine learning has been adopted in many computer vision applications, including image classification, question answering, and concept recovery, often matching or surpassing human performance \citep{krizhevsky2012imagenet,radford_learning_2021}. However, the lack of interpretability in these models' predictions has raised concerns about their trustworthiness \citep{doshi-velez_towards_2017,rudin_stop_2019,lipton_mythos_2018}. 

\begin{figure}[t]
    \centering
    \includegraphics[width=\linewidth]{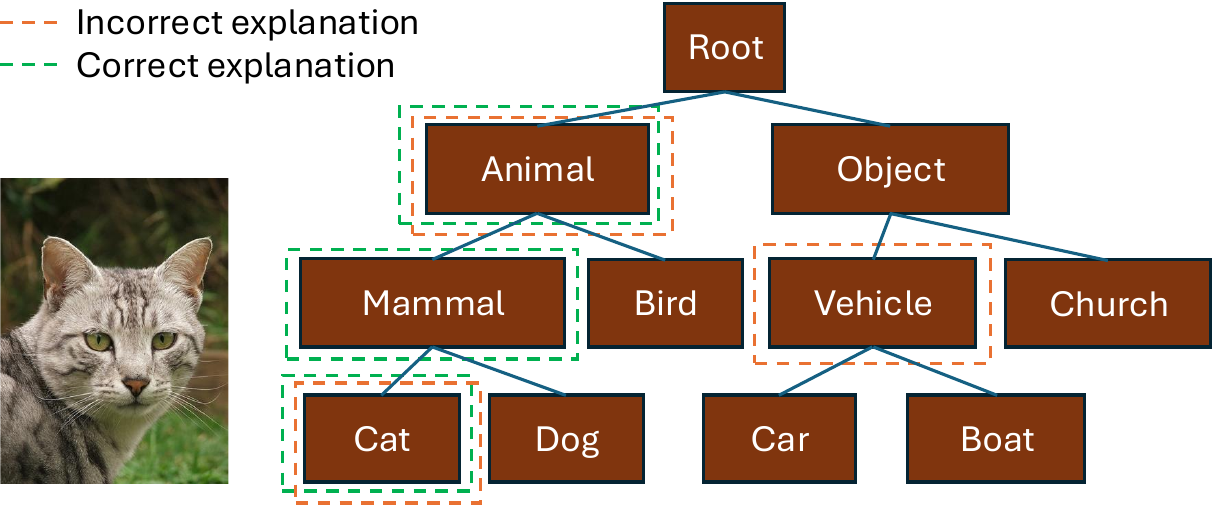}
    \vspace{-0.5cm}
    \caption{
        \textbf{An illustration of hierarchical concept explanations for image classification.} Given an image of a \texttt{cat}, an interpretable-by-design model should extract the concepts that form a rooted path in the hierarchy, and use only these concepts to classify the image as a \texttt{cat}. However, existing concept recovery methods may extract concepts that are inconsistent with the hierarchy, such as \texttt{vehicle}, leading to incorrect explanations.
    }
    \vspace{-0.3cm}
    \label{fig:main}
\end{figure}

Interpretable models in computer vision generally fall into two categories: \textit{post-hoc explanation} and \textit{interpretable-by-design} methods.
Post-hoc explanation methods aim to provide insights into the decision-making process of trained black-box models~\citep{ribeiro_why_2016,lundberg_unified_2017,selvaraju_gradcam_2017,sundararajan_axiomatic_2017}. However, such methods often suffer from a lack of faithfulness and stability with respect to the original pre-trained models~\citep{adebayo_sanity_2018,bilodeau_impossibility_2024}. This paper focuses on interpretable-by-design methods, which build interpretability directly into the model training process~\citep{chen_prototype_2019,alvarez-melis_towards_2018,koh2020concept}. Such models usually consist of two steps: (1) extracting human-interpretable concepts from the input and (2) using only these concepts for downstream tasks such as classification or regression.\footnote{
    There is an emerging use of Chain-of-Thought~\citep{wei2022chain} as an interpretability method, which is not fully interpretable-by-design; see \S\ref{sec:interp_design_vs_cot}.
} 
For instance, given an image of a cat, an interpretable-by-design model would extract concepts such as \texttt{animal}, \texttt{furry}, and \texttt{short muzzle}, and then classify the image using only these concepts. 

Recent methods for recovering human-interpretable concepts from images differ in how they use supervision and whether they can handle unseen concepts. 
In \textit{fully supervised concept recovery}, a model is trained to predict a predefined set of concepts \citep{koh2020concept, kim2018interpretability}. While effective, this approach does not generalize to unseen concepts and requires annotations that are costly to obtain for large datasets with many concepts.
In \textit{concept-specific supervised recovery}, a model is trained to predict whether a given concept is present in a given image. While this approach also requires annotated data, by leveraging vision-language embeddings such as CLIP \cite{radford_learning_2021}, it can generalize to new concepts at inference time \cite{chattopadhyay2024bootstrapping}.
\textit{Zero-shot concept recovery} methods rely solely on pre-trained vision-language models to predict concepts without requiring additional annotations \citep{menon_visual_2022}, albeit at the cost of runtime efficiency.
Finally, \textit{sparse concept recovery} methods start with a predefined set of concepts and identify those present in an image by representing the image embedding as a sparse linear combination of the concept embeddings \cite{chattopadhyay_information_2023,bhalla_interpreting_2024}. This approach is particularly scalable because it focuses on selecting a small number concepts.  

However, the aforementioned methods fail to capture hierarchical relationships among concepts, such as those between hypernyms and hyponyms. 
For instance, in \cref{fig:main}, the concept hierarchy takes the form of a tree whose leaves correspond to the object classes.
In this case, there is a unique path connecting each class to the root of the tree, and the concepts along that path can be viewed as an explanation for the class.
{\color{black}For example, the explanation for \texttt{cat} in \Cref{fig:main} is given by the path \texttt{animal} $\rightarrow$ \texttt{mammal} $\rightarrow$ \texttt{cat}.}
By neglecting this hierarchy, existing concept recovery methods may identify concepts that are inconsistent with it, leading to unreliable explanations and predictions. We address this limitation by drawing inspiration from prior work on the geometry of hierarchical concepts in large language models \citep{park_geometry_2025} and from hierarchical sparse coding \citep{jost_tree-based_2006, la_tree-based_2006,jenatton2011proximal}. This leads to a hierarchical sparse coding formulation for concept recovery that produces more reliable explanations.

\myparagraph{Contributions} In this paper, we propose \ourslong{} (\ours), a framework that leverages hierarchical relationships between concepts to improve concept recovery for interpretable image classification. We summarize \ours{} below and highlight our contributions.

\begin{itemize}
    \item \textit{Hierarchical Concept Embedding:} In \Cref{sec:generative_model}, we propose a geometric framework for embedding a given hierarchy of semantic concepts. More specifically, we identify the following \textit{ideal geometric properties} for hierarchical concept embeddings: embeddings of descendant synsets\footnote{A synset is a categorical concept that groups all synonyms together (e.g., \texttt{cats}). A concept is a higher level abstraction that includes synsets and differences of synsets. One can view synsets as nodes in a hierarchy and synset differences as edges.} 
    should be close to those of their parent synsets, while embeddings of sibling synsets should be well separated. We also incorporate geometric conditions inspired by prior work on the hierarchical geometry of large language models \citep{park_geometry_2025}. We analyze these properties theoretically and verify them empirically in vision-language models. 
    \item \textit{Hierarchical Concept Pursuit:} In \Cref{sec:sparse_recovery_hierarchical}, we leverage the geometric properties of hierarchical concept embeddings to propose a concept recovery procedure that proceeds in two steps. First, it constructs a \textit{hierarchical dictionary} from pre-trained vision-language embeddings by taking differences between the embeddings of parent and child synsets. Second, it leverages \textit{hierarchical sparse coding} to recover a rooted path in the hierarchy, thereby recovering concepts for interpretable classification. 
    \item \textit{Interpretable Image Classification:} In \Cref{sec:experiments}, we demonstrate through experiments on synthetic and real-world datasets that \ours{} outperforms sparse concept recovery baselines in concept recovery while maintaining competitive classification accuracy. In few-shot settings, \ours{} outperforms all interpretable baselines in both classification accuracy and concept recovery. We also show that for datasets without a predefined hierarchy, we can construct a meaningful hierarchy using taxonomy induction methods \citep{hearst_automatic_1992,le_inferring_2019,zeng2024chain} and still achieve improved concept recovery using our framework.
\end{itemize}

\section{Preliminaries}\label{sec:preliminaries}

\subsection{Interpretable-by-design classification models}

Interpretable-by-design classification models often proceed in two steps: (1) predicting human-interpretable concepts from the input and (2) using this concept-based representation for classification. Depending on the approach, interpretability may arise either from the use of linear classifiers (e.g., in concept bottleneck models (CBMs) \citep{koh2020concept} the importance of a concept depends on the magnitude of the classifier coefficients) or from the selection of a small set of most relevant concepts for classification (e.g., via information gain \cite{chattopadhyay2023variational} or semantic sparse coding \cite{chattopadhyay_information_2023,bhalla_interpreting_2024}). As a result, the main design space in these models lies in reliable concept prediction. CBMs use supervision to learn concept predictors; however, they are not scalable because they require additional labeled data. Semantic sparse coding approaches instead use pre-trained embeddings to extract concepts from data, with the goal of representing an image embedding as a sparse linear combination of concept embeddings. In this work, we focus on extracting concepts from pre-trained image embeddings via a sparse coding objective.

\subsection{Sparse coding for concept extraction}\label{sec:preliminaries_sparse_coding}
Sparse coding aims to represent data as a sparse linear combination of basis elements, typically chosen from an overcomplete dictionary \cite{foucart2013invitation}. Given an input signal $\mathbf{x} \in \mathbb{R}^d$ and a dictionary $\mathbf{D} \in \mathbb{R}^{d \times k}$, sparse coding seeks a sparse vector $\mathbf{z} \in \mathbb{R}^k$ such that $\mathbf{x} \approx \mathbf{D} \mathbf{z}$, where the sparsity of $\mathbf{z}$ encourages the model to use only a few dictionary elements to reconstruct the input signal. 
To find a sparse representation $\mathbf{z}$, we can solve the following optimization problem:
\begin{equation}\label{eq:sparse_coding}
    \min_{\mathbf{z}} \|\mathbf{x} - \mathbf{D} \mathbf{z}\|_2^2 + \lambda \|\mathbf{z}\|_0,
\end{equation}
where $\|\cdot\|_0$ denotes the $\ell_0$ semi-norm and $\lambda$ is a regularization parameter that controls the trade-off between reconstruction accuracy and sparsity.
The optimization problem in \eqref{eq:sparse_coding} can be solved using various algorithms, such as orthogonal matching pursuit (OMP) \cite{pati1993orthogonal} or basis pursuit ($\ell_1$ relaxation) \cite{chen2001atomic}. Intuitively, standard OMP finds a sparse solution greedily: starting from the input $\bx$ as the residual error, it repeatedly selects the dictionary atom most aligned with the current residual, adds that atom to the support, recomputes the coefficients of the selected atoms by least squares, and then updates the residual by subtracting its projection onto the span of the selected atoms (see \Cref{alg:omp}). 

In interpretable image classification, the signal $\mathbf{x}$ is typically the embedding of an image obtained from a pre-trained model, the dictionary $\mathbf{D}$ consists of text embeddings corresponding to human-interpretable concepts, and the magnitudes of the sparse coefficients in $\bz$ indicate the importance of the corresponding concepts for representing the image \cite{chattopadhyay_information_2023,bhalla_interpreting_2024,gandelsman_interpreting_2024-1}.\footnote{Although not the initial motivation, the validity of this approach is supported by the linear representation and superposition hypotheses \cite{elhage2023superposition,park_linear_2024,saglamLargeLanguageModels2025}.}
Thus, OMP can be viewed as iteratively asking which concept (atom) best explains what is left unexplained by the current explanation (residual)
\cite{chattopadhyay_information_2023}.

However, sparse coding treats all columns of $\bD$ equally, and thus it neglects the hierarchical structure of the concept set. For example, the synset \texttt{vehicle} can be further divided into \texttt{car}, \texttt{truck}, and \texttt{motorcycle}, each of which can be further divided into more specific synsets. These concepts, which \textit{implicitly} describe the difference between a fine-grained synset and its parent synset, correspond to the edges of the synset hierarchy. To capture this hierarchical structure, we need to extend the sparse coding framework to incorporate hierarchical relationships among concepts. As motivation for this goal, we next briefly discuss related work on geometric structure in vector embeddings.

\subsection{Geometric Structures of Meanings in Vector Embeddings}\label{sec:preliminaries_geometric_structures}
A notable example of geometric structure in vector embeddings is Word2Vec \cite{mikolov2013distributed}, where contrastive semantic relationships can be captured through vector arithmetic. More recent work has explored linear and compositional structures in a vector space \citep{trager2023linear,trager2024compositional,park_linear_2024,jiang2024origins}. Hierarchical compositional structures have also been studied for learning part-based representations \citep{kortylewski_greedy_2019,sclocchi2024probing}.
On the other hand, \citet{park_geometry_2025} connect hierarchical class-based semantics to a geometric condition: making a concept more specific should not interfere with the coarser concept. This intuition can be written as
\begin{align}
  \left(\mathbf{v}_{\mathrm{child}} - \mathbf{v}_{\mathrm{parent}}\right)^\top \mathbf{v}_{\mathrm{parent}} = 0.
\end{align}
where $\mathbf{v}_{\mathrm{parent}}$ and $\mathbf{v}_{\mathrm{child}}$ denote representations along one branch of a hierarchy. In the next section, we build on this insight to design a hierarchical concept embedding model.

\section{Hierarchical Concept Embedding}\label{sec:generative_model}

Given a hierarchy of semantic concepts, we seek vector representations whose geometry facilitates hierarchical concept recovery. In this section, we identify ideal geometric conditions that hierarchical concept embeddings should satisfy to enable such recovery, and later show that these conditions are empirically supported in vision-language models. The specific conditions we consider are the following:
\begin{itemize}
    \item \textit{Well-clustered synset embeddings (\Cref{sec:well_clustered_synsets}):} In the embedding space, sibling synsets should cluster sufficiently tightly around their parent, while remaining sufficiently well separated from one another to be easily distinguished.
    As a consequence, concepts with different ancestries are well separated and can be reliably recovered.
    \item \textit{Hierarchical orthogonality (\Cref{sec:hierarchical_independence}):} Given a parent node, the difference between a child's embedding and the parent's embedding should be orthogonal to the parent's embedding. This condition helps preserve semantic meaning across levels of the hierarchy in the embedding space. 
\end{itemize}
\Cref{fig:data_generating_process} provides a two-dimensional illustration of the desired geometry: descendants cluster around their parents, while child-parent difference vectors capture refinements that are orthogonal to the parent's representation. These conditions drive the design of the concept recovery algorithm in \Cref{sec:sparse_recovery_hierarchical}.

\begin{figure}[tbp]
  \centering
  \includegraphics[width=\linewidth]{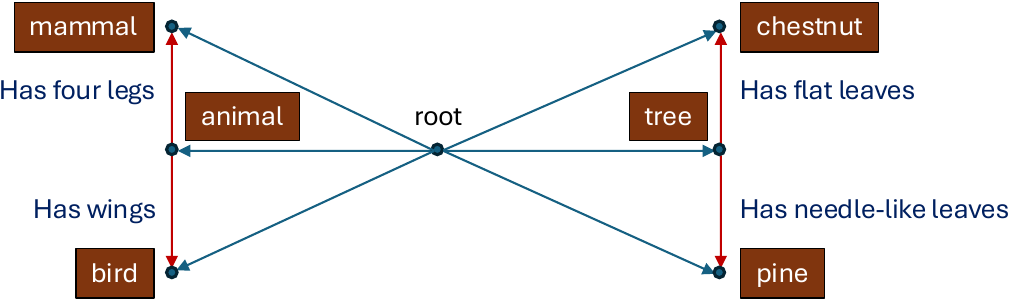}
  \vspace{-0.4cm}
  \caption{\textbf{Illustration of the hierarchical data model in $\mathbb{R}^2$.} The root node has two child nodes, \texttt{animal} and \texttt{tree}, each with its own children. There are two desirable conditions for concept identifiability: (1) children cluster around the parent while sibling nodes remain separated; and (2) the difference between a child and its parent (shown as red arrows) is orthogonal to the parent, and the differences between the children of a parent form a simplex (which is a line in $\mathbb{R}^2$). The difference vectors capture the characteristics that distinguish each child from its parent. See \Cref{sec:generative_model} for more details.}
  \label{fig:data_generating_process}
  \vspace{-0.3cm}
\end{figure}

\begin{figure}[tb]
  \centering
  \includegraphics[width=\linewidth]{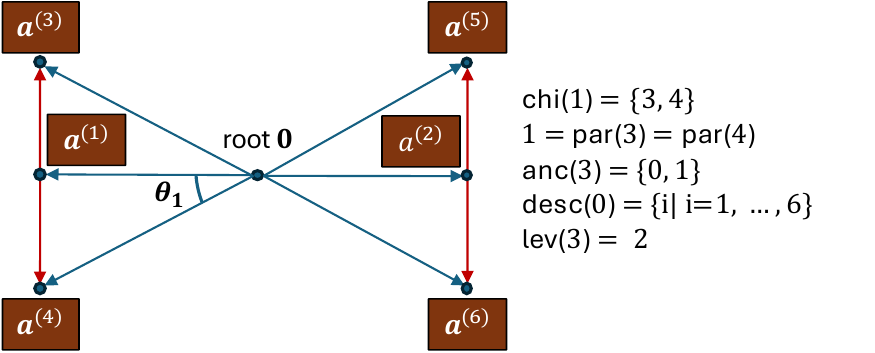}
  \vspace{-0.5cm}
  \caption{A hierarchy with $L=2$ levels, branching ratio $b=2$, and $N_L=6$ nodes.}
  \label{fig:notations}
  \vspace{-0.4cm}
\end{figure}

\subsection{Well-clustered synset embeddings}\label{sec:well_clustered_synsets}

To ensure that each node in the hierarchy can be uniquely assigned to its parent, we impose geometric constraints on the embedding space. Specifically, consider a hierarchy of synsets, such as the one illustrated in \Cref{fig:notations}. Let $0$ denote the root node and $\mathcal{A} = \{1, \ldots, N_L\}$ be the set of non-root nodes. Assume the hierarchy has $L$ levels and that each internal node has $b$ children. For node $i$, let $\parent(i)$, $\child(i)$, $\anc(i)$, and $\desc(i)$ denote its parent, children, ancestors, and descendants, respectively. The level of node $i$ is the number of its ancestors, namely $\lev(i)=|\anc(i)|$. Our goal is to associate each node $i$ with a vector representation $\ba^{(i)} \in \mathbb{R}^d$.
The following proposition formalizes conditions on $\{\ba^{(i)}\}_{i=1}^{N_L}$ that guarantee well-separated subtrees.

\begin{proposition}[Well-clustered hierarchy ensures unique parent assignment]
\label{prop:well_clustered_hierarchy}
Suppose the following geometric conditions hold for all nodes in the hierarchy:
\begin{enumerate}
    \item \textbf{Subtree containment:}
    The vector representations of the descendants of node $i$, $\{\ba^{(j)}\}_{j \in \desc(i)}$, are contained in a cone with axis $\ba^{(i)}$ and half-angle $\theta_{\lev(i)}$, i.e.,
    \begin{equation}\label{eq:subtree_well_clustered}
      \forall i \in \cA, \quad \max_{j \in \desc(i)} \angle(\ba^{(i)}, \ba^{(j)}) \leq \theta_{\lev(i)}.
    \end{equation}

    \item \textbf{Sibling-cone disjointness:}
    For any parent node $i$ and any pair of distinct children $j, j' \in \child(i)$, their corresponding subtree cones do not intersect:
    \begin{equation}\label{eq:cone_disjoint}
      \angle(\ba^{(j)}, \ba^{(j')}) > \theta_{\lev(j)} + \theta_{\lev(j')}.
    \end{equation}
\end{enumerate}
Then the subtrees rooted at any two sibling nodes are disjoint, and every node has a unique parent.
\end{proposition}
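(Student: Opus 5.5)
The argument uses the angle $\angle(\cdot,\cdot)$ as a metric on the unit sphere, i.e., on directions, and in particular the triangle inequality $\angle(\bm{u},\bm{w}) \le \angle(\bm{u},\bm{v}) + \angle(\bm{v},\bm{w})$ for nonzero vectors. We interpret ``the subtree rooted at $j$'' as the set of directions $\{\ba^{(j)}\}\cup\{\ba^{(k)}\}_{k\in\desc(j)}$. By subtree containment, this set lies in the closed cone $C_j=\{\bm{v}\neq \bzero : \angle(\bm{v},\ba^{(j)})\le \theta_{\lev(j)}\}$; the node $j$ itself is included trivially since its angle to the axis is $0$.

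\textbf{Step 1 (disjointness of sibling cones).} Let $j\ne j'$ be children of the same parent $i$, and suppose some nonzero $\bm{v}$ lies in $C_j\cap C_{j'}$. The triangle inequality gives
\[
\angle(\ba^{(j)},\ba^{(j')})\le \angle(\ba^{(j)},\bm{v})+\angle(\bm{v},\ba^{(j')})\le \theta_{\lev(j)}+\theta_{\lev(j')},
\]
which contradicts sibling-cone disjointness. Hence $C_j\cap C_{j'}=\emptyset$, and the subtrees of $j$ and $j'$ are disjoint in the embedding space. In particular, no representation $\ba^{(k)}$ can lie simultaneously in the subtree cones of two siblings.

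\textbf{Step 2 (unique parent assignment).} Fix a node $k$ and suppose two distinct nodes $p\ne p'$ could both serve as its parent, in the sense that $\ba^{(k)}$ lies in both $C_p$ and $C_{p'}$. Both $p$ and $p'$ are ancestors of $k$ in the tree, so they lie on the single root-to-$k$ path; they are therefore comparable, and a cone-based argument cannot by itself separate them. The statement should instead be read as follows: descending from the root, at each level exactly one child cone contains $\ba^{(k)}$. Induct on depth. The root's cone contains $\ba^{(k)}$ trivially. If $\ba^{(k)}\in C_i$ for an ancestor $i$, then the child $j$ of $i$ on the path to $k$ satisfies $\ba^{(k)}\in C_j$ by containment, and Step 1 excludes every sibling cone. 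The recursion therefore determines the ancestor chain uniquely, and the last node in this chain before $k$ is its unique parent, recoverable from geometry alone. Embeddings of nodes in different subtrees can never be confused, because at their first divergent ancestor they fall into disjoint sibling cones.

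The main obstacle is interpretive rather than computational. ``Unique parent'' is automatic in a tree, so the content has to be phrased as geometric identifiability as above. The one technical point is justifying the angular triangle inequality, which holds because the angle is the geodesic distance on $S^{d-1}$. The rest is the two-line contradiction in Step 1.
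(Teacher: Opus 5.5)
Your Step 1 is the paper's argument. The paper supposes some $k\in\desc(j)\cap\desc(j')$ and applies the spherical triangle inequality through $\ba^{(k)}$ to contradict sibling-cone disjointness, which is your contradiction with $\bm{v}=\ba^{(k)}$; running it for arbitrary $\bm{v}\in C_j\cap C_{j'}$ is a harmless strengthening. The paper stops there, implicitly reading ``unique parent'' as ``no node descends from two siblings,'' so your Step 2 top-down identifiability reading is a correct addition rather than something the paper proves. One small slip: $\ba^{(k)}\in C_p$ by itself does not make $p$ an ancestor of $k$, contrary to the opening of Step 2, but your final argument never relies on that.
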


All proofs are provided in \Cref{app:proofs}. A sufficient way to satisfy both conditions in \Cref{prop:well_clustered_hierarchy} is to impose a geometrically decreasing half-angle schedule, as presented next.

\begin{proposition}[Geometrically decreasing half-angle schedule]\label{prop:angle_schedule}
If the half-angles satisfy
\begin{equation}
  	\theta_{l+1} \le \min\{r, 1/b\}\,\theta_l
\end{equation}
for some $r\in(0,1/2)$, there exists a placement of node embeddings such that the conditions in \Cref{prop:well_clustered_hierarchy} are satisfied.
\end{proposition}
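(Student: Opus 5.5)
We prove the statement by an explicit top-down construction, placing children inside their parent's cone level by level, in the spirit of \Cref{fig:data_generating_process}. Work in $\R^d$ with $d \ge 3$, so that the unit directions at angle $\alpha$ from a fixed axis form a $(d-2)$-sphere with room for $b$ well-separated points. Suppose $\ba^{(i)}$ has already been placed with $\lev(i)=l$. We place its $b$ children at a common polar angle $\alpha_l$ from $\ba^{(i)}$, with azimuths spread out. A concrete choice is $b$ equally spaced points on a great circle of the link sphere, which works in any $d\ge 3$; one could also use a regular simplex configuration when $d \ge b+1$. Norms are irrelevant, so all embeddings can be taken to be unit vectors. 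The root has no constraint, so its children may be placed with full freedom.

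\textbf{Step 1 (containment).} We choose $\alpha_l = \theta_l - \theta_{l+1}$. By induction on depth, every descendant $j$ of a child $c$ satisfies $\angle(\ba^{(c)},\ba^{(j)}) \le \theta_{l+1}$. The triangle inequality for angles on the sphere then gives
\[
  \angle(\ba^{(i)},\ba^{(j)}) \le \alpha_l + \theta_{l+1} = \theta_l .
\]
The child $c$ itself sits at angle exactly $\alpha_l \le \theta_l$. This establishes \eqref{eq:subtree_well_clustered}, with the base case at the leaves trivial. The hypothesis $\theta_{l+1}\le r\theta_l < \theta_l/2$ guarantees $\alpha_l > \theta_l/2 > 0$.

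\textbf{Step 2 (sibling disjointness).} This is the main obstacle. Two children $c, c'$ at polar angle $\alpha$ from the axis, with azimuthal separation $\phi$, satisfy the spherical law of cosines
\[
  \cos\angle(\ba^{(c)},\ba^{(c')}) = \cos^2\alpha + \sin^2\alpha\cos\phi .
\]
With equally spaced azimuths, $\phi \ge 2\pi/b$. We must show this angle exceeds $2\theta_{l+1}$. For small angles the separation behaves like $2\sin\alpha\,\sin(\phi/2)$, which is at least of order $\alpha\cdot 2\pi/b \cdot (2/\pi)$ up to the standard $\sin x \ge 2x/\pi$ factors. The requirement therefore reduces to roughly $c\,\alpha_l/b > 2\theta_{l+1}$. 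Here the two halves of the hypothesis enter. From $\theta_{l+1}\le r\theta_l$ we get $\alpha_l \ge (1-r)\theta_l > \theta_l/2$. From $\theta_{l+1}\le \theta_l/b$ we get $2\theta_{l+1} \le 2\theta_l/b$. The chord estimate then needs $\sin(\pi/b)\sin\alpha_l > \sin\theta_{l+1}$ or a similar precise form. I would first try the exact inequality
\[
  \sin\tfrac{\angle}{2} = \sin\alpha\,\sin\tfrac{\phi}{2}
\]
together with concavity bounds, and check the constants; if they fall short, I would adjust the construction rather than the hypothesis. One adjustment is to use antipodal azimuths when $b=2$, which gives an angle of $2\alpha_l > \theta_l \ge 2\theta_{l+1}$ directly; the other is to invoke the slack $r<1/2$.

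\textbf{Step 3 (existence and conclusion).} The construction is fully explicit, so existence is immediate once Steps 1 and 2 hold at every level. Both conditions of \Cref{prop:well_clustered_hierarchy} are then satisfied, and its conclusion follows. For the top level, where angles need not be small, we may assume $\theta_1 \le \pi/2$ or choose the root's children freely. This sidesteps the case in which the small-angle estimates break down.
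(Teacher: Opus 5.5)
Your nested-cone construction is a sound route, and it differs from the paper's. The paper gets containment from the geometric-series bound $\sum_{k>l}\theta_k\le\theta_l\,r/(1-r)$, and disjointness from a planar packing of $b$ arcs of width $2\theta_{l+1}$ into an arc of width $2\theta_l$. Your polar angle $\alpha_l=\theta_l-\theta_{l+1}$ makes containment an exact induction, needing only $\theta_{l+1}\le\theta_l$. It also reduces disjointness to the single inequality $\sin\alpha_l\sin(\pi/b)>\sin\theta_{l+1}$.

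That inequality, however, is the whole content of the proposition, and you leave it unproved. The chain your estimates suggest also does not close. Its honest form is $\angle\ge 2\sin\tfrac{\angle}{2}\ge 2\sin\alpha_l\sin\tfrac{\pi}{b}\ge\tfrac{4}{b}\sin\alpha_l$. Using only $\alpha_l>\theta_l/2$, this exceeds $2\theta_l/b$ (the worst case of $2\theta_{l+1}$, allowed for $b\ge3$) only if $2\sin(\theta_l/2)\ge\theta_l$, which never holds.

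The slack has to come from the $1/b$ half of the hypothesis. For $b\ge3$ it gives $\alpha_l\ge(1-\tfrac1b)\theta_l\ge\tfrac23\theta_l$. If also $\theta_l\le\pi/2$, the chord bound $\sin x\ge\tfrac{3\sqrt3}{2\pi}x$ on $[0,\pi/3]$ yields
\[
  \sin\alpha_l\,\sin\tfrac{\pi}{b}\;\ge\;\frac{\sqrt3\,\theta_l}{\pi}\cdot\frac{3\sqrt3}{2b}\;=\;\frac{9}{2\pi}\cdot\frac{\theta_l}{b}\;>\;\theta_{l+1}\;\ge\;\sin\theta_{l+1}.
\]
Hence $\angle(\ba^{(c)},\ba^{(c')})>2\theta_{l+1}$, strictly even at $\theta_{l+1}=\theta_l/b$, where the paper's packing only achieves equality. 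For $b=2$ your antipodal observation suffices, and that is the only place where $r<\tfrac12$ is needed.

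Second, the top level is mishandled. The root's children are siblings, so \eqref{eq:cone_disjoint} requires them to be pairwise more than $2\theta_1$ apart; they cannot be chosen freely. The hypothesis only relates consecutive levels, so it places no bound on $\theta_1$. Assuming $\theta_1\le\pi/2$ does not rescue this: any $b$ unit vectors have some pair at angle at most $\arccos(-\tfrac{1}{b-1})$, so for $b=3$ and $\theta_1\ge\pi/3$ no placement exists at all.

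The statement is defective here, and the paper's argument does not address it either. Your proof must therefore add an explicit assumption such as $\theta_1<\pi/b$. Under it, $b$ equally spaced directions on a great circle are pairwise at least $2\pi/b>2\theta_1$ apart. It also supplies the bound $\theta_l\le\pi/2$ for all $l\ge1$ that the estimate above uses. Without such a bound the Step 2 inequality can fail, since $\sin\alpha_l\to0$ as $\alpha_l\to\pi$.
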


\subsection{Hierarchical Orthogonality}\label{sec:hierarchical_independence}

Inspired by the geometric conditions on concept embeddings in large language models \citep{park_geometry_2025}, we further impose hierarchical orthogonality and simplex conditions on the concept embeddings. 
\begin{itemize}
    \item A parent embedding is orthogonal to the child-parent difference vector. That is, for a parent node $i$, any child node $j \in \child(i)$ satisfies $(\ba^{(j)} - \ba^{(i)})^\top \ba^{(i)} = 0$. 
    \item The differences between the embeddings of two children of node $i$, $\{\ba^{(j)} - \ba^{(j')}\}_{j, j' \in \child(i)}$, form a $(b-1)$-simplex.
\end{itemize}
As we shall see in the next section, these conditions will guide the construction of the dictionary we will use for hierarchical sparse coding. However, for them to hold, the embedding space must satisfy a minimum dimension requirement, as stated in the following proposition.
\begin{proposition}[Depth--branch--dimension necessity]\label{prop:depth_dimension}
In a hierarchy with depth $L$ and branching ratio $b$, suppose all nodes satisfy hierarchical orthogonality and all sibling nodes form a regular $(b-1)$-simplex as in \eqref{eq:multi_ortho} and \eqref{eq:multi_simplex_ip}. Then the embedding dimension must satisfy $d \ge L + b - 1$.
\end{proposition}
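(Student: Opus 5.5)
We prove the bound by building $L+b-1$ linearly independent vectors inside the span of the embeddings along one root-to-leaf path together with the children of the deepest internal node on that path. Since equations \eqref{eq:multi_ortho} and \eqref{eq:multi_simplex_ip} are referenced but not shown, we assume their natural reading. Namely, $(\ba^{(j)}-\ba^{(i)})^\top\ba^{(i)}=0$ for every $j\in\child(i)$, including children of the root. The $b$ children of any node have child-minus-parent difference vectors $\bd^{(j)}=\ba^{(j)}-\ba^{(i)}$ of equal nonzero norm with equal pairwise negative inner products, i.e.\ they form a regular $(b-1)$-simplex centered at the origin. Two standard consequences are that these $b$ difference vectors span a $(b-1)$-dimensional space and sum to zero. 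We also assume the root embedding is $\ba^{(0)}\neq\bzero$, or else that level-one nodes are nonzero.

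First, fix a path $0=i_0\to i_1\to\cdots\to i_L$, and define the edge vectors $\be_\ell=\ba^{(i_\ell)}-\ba^{(i_{\ell-1})}$. Hierarchical orthogonality gives $\be_\ell^\top\ba^{(i_{\ell-1})}=0$, and $\ba^{(i_{\ell-1})}=\ba^{(0)}+\sum_{m<\ell}\be_m$. By induction on $\ell$, we show that $\be_\ell$ is orthogonal to every earlier $\be_m$ and to $\ba^{(0)}$. This is the main obstacle: orthogonality to the parent vector only gives orthogonality to the \emph{sum} of earlier edges.

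To get termwise orthogonality, we use the simplex condition at each ancestor $i_{m-1}$. The child $i_m$ and its siblings $i'_m$ all satisfy orthogonality with respect to $\ba^{(i_{m-1})}$. The descendants of $i_\ell$'s parent lie in the subtree of $i_m$, though, so a sibling comparison is not directly available. The first thing we would try is the following. Apply orthogonality at $i_{\ell-1}$ along different paths of the tree that share the prefix up to $i_{m}$ but diverge at depth $m$. Subtracting the resulting identities isolates $\be_\ell^\top\be_m$. If this fails, we fall back on the paper's definition in \eqref{eq:multi_ortho}, which we expect already states mutual orthogonality of edge vectors along a rooted path, as in \citet{park_geometry_2025}.

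Given these orthogonality relations, the vectors $\ba^{(0)},\be_1,\dots,\be_{L-1}$ are pairwise orthogonal and nonzero, since distinct nodes have distinct embeddings by sibling-cone disjointness in \Cref{prop:well_clustered_hierarchy}. They therefore span an $L$-dimensional subspace $V$. At the last internal node $i_{L-1}$, the $b$ leaf difference vectors span a $(b-1)$-dimensional subspace $W$. Each such difference vector is orthogonal to $\ba^{(i_{L-1})}$ and, by the same orthogonality relations, to each element of $V$, so $W\perp V$. Hence $d\ge\dim(V\oplus W)=L+b-1$. If the root is allowed to be $\bzero$, we count $\be_1,\dots,\be_{L-1}$ together with the simplex at depth $L$, which gives the same total provided level-one nodes are nonzero. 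The simplex contributes $b-1$ dimensions precisely because its vectors sum to zero.
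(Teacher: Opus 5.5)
Your dimension count matches the paper's: the ancestor vectors $\ba^{(i_0)},\dots,\ba^{(i_{L-1})}$ span an $L$-dimensional space, and the sibling simplex below $i_{L-1}$ adds $b-1$ directions orthogonal to it. The paper phrases this as fitting a $(b-1)$-simplex into the affine set $\cV_{L-1}$ of dimension $d-L$. The gap is exactly the step you flag as the main obstacle. Your primary plan is to extract $\be_m^\top\be_\ell=0$ from parent-only orthogonality plus the simplex condition, by comparing paths that diverge at depth $m$. This cannot work. Such paths pass through different nodes at depth $\ell-1$, so they involve different vectors $\be_\ell$. More decisively, the proposition is false under your parent-only reading. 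In $\R^2$ with $L=b=2$, take the root $(1,0)$, level-one nodes $(1,\pm1)$, and leaves $(1,1)\pm\tfrac12(-1,1)$ and $(1,-1)\pm\tfrac12(1,1)$. Every child-minus-parent vector is orthogonal to its parent, every sibling pair is a regular $1$-simplex centered at its parent, and all embeddings are distinct. Yet $d=2<3=L+b-1$, so no manipulation of parent-only identities can yield termwise orthogonality.

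What closes the gap is the all-ancestor form of \eqref{eq:multi_ortho}, which you mention only as a hoped-for fallback. You need to derive the orthogonality from it rather than expect the definition to state it. That equation gives $\ba^{(j)\top}\ba^{(\pi_k)}=\|\ba^{(\pi_k)}\|^2=:c_k$ for every node $j$ and every ancestor $\pi_k$ of $j$. So along your path, for $1\le m<\ell$,
\[
\be_m^\top\be_\ell=(c_m-c_m)-(c_{m-1}-c_{m-1})=0.
\]
Likewise $\be_\ell^\top\ba^{(0)}=c_0-c_0=0$ and $\|\be_\ell\|^2=c_\ell-c_{\ell-1}$, and each leaf difference $\ba^{(j)}-\ba^{(i_{L-1})}$ is orthogonal to every $\ba^{(i_k)}$ with $k\le L-1$. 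With this two-line computation your argument is complete. It even proves the linear independence of the ancestor vectors, which the paper only asserts as generic. Note that nonzero edges come from $\lambda_l>0$; the hypotheses of Proposition~\ref{prop:well_clustered_hierarchy} are not assumed here.

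Finally, drop the remark about a zero root. With $\ba^{(0)}=\bzero$ your count gives $(L-1)+(b-1)=L+b-2$, not the same total, and the bound genuinely fails there. In $\R^2$, level-one nodes $(\pm1,0)$ with leaves $(\pm1,\pm1)$ satisfy \eqref{eq:multi_ortho} and \eqref{eq:multi_simplex_ip} with $L=b=2$. Thus $\ba^{(0)}\neq\bzero$ is an essential hypothesis, and the paper's count of $l+1$ independent constraints tacitly relies on it too.
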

Intuitively, at depth $l\in [L-1]$, hierarchical orthogonality imposes $l+1$ independent affine constraints, restricting children to a $(d-l-1)$-dimensional feasible subspace. Embedding a regular $(b-1)$-simplex within that subspace requires $(d-l-1) \ge (b-1)$. Evaluating this at the deepest internal level $l=L-1$ yields $d \ge L + b - 1$. This condition is satisfied in practice. For example, CLIP embeddings have $d=768 \gg 38$ ($14 + 25 - 1$ for WordNet's depth and maximum branching ratio).

\section{Hierarchical Concept Pursuit}\label{sec:sparse_recovery_hierarchical}

Given the concept embedding model described in \Cref{sec:generative_model}, we now turn to the problem of recovering the sparse representation of a signal generated from this model. To do so, we first construct a hierarchical dictionary that captures the hierarchical structure of the synsets and concepts (\Cref{sec:hierarchical_dictionary_construction}). We then propose a hierarchical sparse coding algorithm that leverages this structure to improve sparse recovery (\Cref{sec:hierarchical_omp}).

\subsection{Hierarchical Concept Dictionary Construction}\label{sec:hierarchical_dictionary_construction}

First, we define the hierarchical dictionary $\bD$ as
\begin{equation}\label{eq:full_dictionary}
\bD = \bigl[\, \ba^{(j)} - \ba^{(\parent(j))} \,\bigr]_{j \in \cA}.
\end{equation}
Each column captures the difference between a synset and its parent. 
We define $\ba^{(\mathrm{root})} = \mathbf{0}$, so the atoms for root children reduce to $\ba^{(j)} - \mathbf{0} = \ba^{(j)}$.

In the context of interpretable image classification, $\ba^{(i)}$ represents the embedding of a synset within a concept hierarchy, such as WordNet \citep{miller_wordnet_1995}, while the difference $\ba^{(i)} - \ba^{(\parent(i))}$ represents the concept that distinguishes the synset from its parent. For example, let node $i$ denote the synset \texttt{bear} and let node $j \in \child(i)$ denote the synset \texttt{polar bear}. Then the difference $\ba^{(j)} - \ba^{(i)}$ can be interpreted as the refinement that distinguishes \texttt{polar bear} from \texttt{bear}, such as \texttt{white fur}. This construction of the dictionary in \eqref{eq:full_dictionary} avoids trivial solutions in the sparse formulation of concept recovery; for example, the sparsest explanation for an image of a \texttt{cat} would otherwise be simply the concept \texttt{cat}. Note that the difference embeddings have a grounded and interpretable meaning on their own, as they represent directions that differentiate child synsets from their parents. 
An illustration of this hierarchical sparse decomposition is shown in \Cref{fig:hierarchical_explanation}.

\begin{figure}[t]
  \centering
  \includegraphics[width=\linewidth]{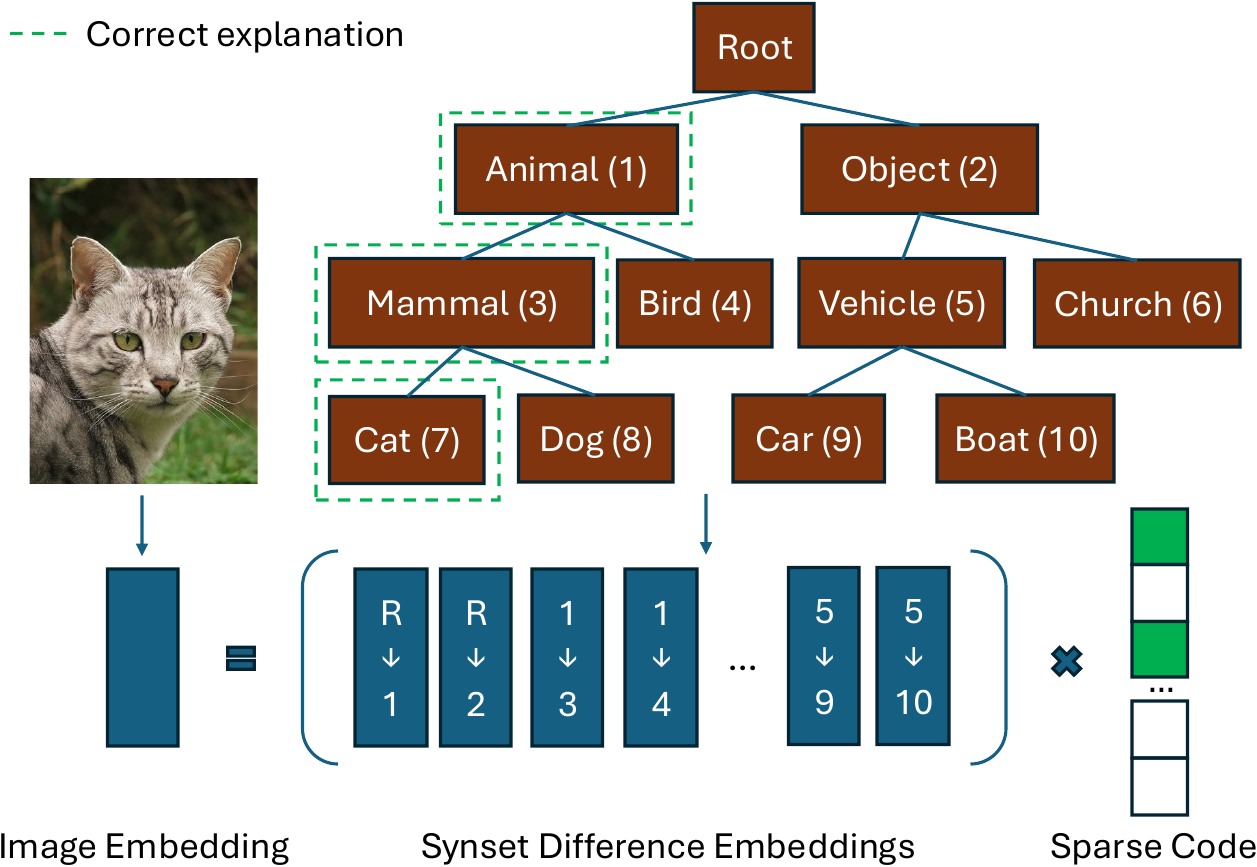}
  \vspace{-0.5cm}
  \caption{\textbf{Illustration of the hierarchical sparse decomposition in \eqref{eq:noisy_hier_sparse_expansion}.} Given an image of a cat, the correct explanation follows the path from the root to the leaf node \texttt{Cat} (shown in green dashed lines). The image embedding is expressed as a sparse linear combination of the root child synset embedding (e.g., \texttt{Animal}) and synset difference embeddings along the path (e.g., \texttt{Mammal} $-$ \texttt{Animal}, \texttt{Cat} $-$ \texttt{Mammal}). The sparse code has non-zero entries only for atoms corresponding to nodes on the correct path.}
  \label{fig:hierarchical_explanation}
  \vspace{-0.3cm}
\end{figure}

Under this hierarchical dictionary, a synset embedding can be written as a sparse linear combination of atoms along the root-to-node path:
\begin{equation}\label{eq:hier_sparse_expansion}
  \ba^{(i)}
  = \sum_{j \in \anc(i) \cup \{i\}}
      \bigl(\ba^{(j)} - \ba^{(\parent(j))}\bigr),
\end{equation}
which follows by telescoping. Since $\ba^{(\mathrm{root})} = \mathbf{0}$, the coefficients along the path are all equal to $1$ in the noiseless case. If an image embedding $\bx$ is generated from synset $i$ with additive noise, i.e., $\bx = \ba^{(i)} + \epsilon$, then we expect a noisy expansion with coefficients $\alpha_j \in \mathbb{R}$:
\begin{equation}\label{eq:noisy_hier_sparse_expansion}
  \bx = \sum_{j \in \anc(i) \cup \{i\}} \alpha_j
      \bigl(\ba^{(j)} - \ba^{(\parent(j))}\bigr).
\end{equation}
Recovering the synset $i$ is therefore equivalent to recovering a sparse code whose support corresponds to the nodes on the root-to-$i$ path, although the coefficients may deviate from $1$.

\subsection{Hierarchical Beam OMP (HB-OMP)}
\label{sec:hierarchical_omp}

\definecolor{ModAColor}{RGB}{0,92,184}
\definecolor{ModBColor}{RGB}{176,40,120}
\definecolor{ModAFill}{RGB}{230,245,255}
\definecolor{ModBFill}{RGB}{250,234,246}
\definecolor{CommentGray}{RGB}{128,128,128}
\newcommand{\ModA}[1]{{\color{black}\textcolor{ModAColor}{#1}\color{black}}}
\newcommand{\ModB}[1]{{\color{black}\textcolor{ModBColor}{#1}\color{black}}}
\newcommand{\ModAline}[1]{{\color{black}\colorbox{ModAFill}{\color{black}#1}\color{black}}}
\newcommand{\ModBline}[1]{{\color{black}\colorbox{ModBFill}{\color{black}#1}\color{black}}}
\algrenewcommand{\algorithmiccomment}[1]{\hfill{\color{CommentGray}$\triangleright$ #1}}

\begin{algorithm}[t]
  \caption{Orthogonal Matching Pursuit (OMP)}
  \label{alg:omp}
  \begin{algorithmic}[1]
  \Require $\bx \in \R^d$, dict $\bD$, tol $\epsilon$, max steps $T$
  \State Init residual error $\br^{(0)} \gets \bx$, sparse code $\bz^{(0)} \gets \bzero$
  \For{$t = 1, \ldots, T$}
      \If{$\|\br^{(t-1)}\|_2 < \epsilon$}
          \State \textbf{break}
      \EndIf
      \State $i^\star \gets \arg\max_i \left|\frac{\langle \br^{(t-1)}, \bd^{(i)} \rangle}{\|\br^{(t-1)}\|_2\,\|\bd^{(i)}\|_2}\right|$
      \State $\cS^{(t)} \gets \mathrm{supp}(\bz^{(t-1)}) \cup \{i^\star\}$ \Comment{extend support}
      \State $\bz^{(t)} \gets \argmin\limits_{\bw:\mathrm{supp}(\bw)\subseteq \cS^{(t)}} \|\bx - \bD \bw\|_2^2$ \Comment{update code}
      \State $\br^{(t)} \gets \bx - \bD \bz^{(t)}$ \Comment{update residual}
  \EndFor
  \State Return $\bz^{(t^\star)}$ where $t^\star \in \arg\min_t \|\br^{(t)}\|_2$
  \end{algorithmic}
\end{algorithm}

\begin{algorithm}[t]
  \caption{Hierarchical Beam OMP (HB-OMP)}
  \label{alg:beam-path-hierarchical-omp}
  \begin{algorithmic}[1]
  \Require $\bx \in \R^d$, dict $\bD$, tol $\epsilon$, max steps $T$, children map $\child(\cdot)$, beam size $B$, residual error vector $\br_h$ for hypothesis $h$
  \State Init with a null hypothesis set: $\cH^{(0)} \gets \Big\{\big(\bzero, \bx, \mathrm{root} \big)\Big\}$
  \Comment{each hypothesis consists of (sparse code, residual, last node index)}
  \For{$t = 1, \ldots, T$}
      \If{$\min_{h \in \cH^{(t-1)}} \|\br_h\|_2 < \epsilon$}
          \State \textbf{break} \Comment{some hypothesis have a small residual}
      \EndIf
      \State $\cH_{\text{new}} \gets \emptyset$
      \For{\ModBline{$h=(\bz, \br, i_{last})$ in $\cH^{(t-1)}$}}
            \State \ModAline{$\cI_{\text{active}} \gets \child(i_{last})$}
            \State $c_i \gets \left|\frac{\langle \br, \bd^{(i)} \rangle}{\|\br\|_2\,\|\bd^{(i)}\|_2}\right|$ for all \ModBline{$i \in \cI_{\text{active}}$}
            \State \ModBline{$\cC \gets \text{top-}B$ indices of $c_i$ in $\cI_{\text{active}}$} 
            \State \ModBline{$\cH_{\text{new}} \gets \cH_{\text{new}} \cup \textsc{ExtendHypo}(h, \cC, \bx, \bD)$}
            \Statex \hfill\Comment{\Cref{alg:extend-hypothesis}}
          \EndFor
          \State \ModBline{$\cH^{(t)} \gets$ $B$ lowest-$\|\br_h\|_2^2$ hypotheses in $\cH_{\text{new}}$}
  \EndFor
  \State Return $\bz_{h^\star}$ where $h^\star \in \arg\min_{h \in \cH^{(t)}} \|\br_h\|_2$
  \end{algorithmic}
\end{algorithm}

\begin{algorithm}[t]
  \caption{Extend Hypothesis with Sparse Update}
  \label{alg:extend-hypothesis}
  \begin{algorithmic}[1]
    \Require hypothesis $h=(\bz, \br, i_{last})$, candidates $\cC$, signal $\bx$, dict $\bD$
    \State $\cH_{\text{new}} \gets \emptyset$
    \If{$\cC = \emptyset$} \Comment{leaf reached because no children}
      \State \ModBline{Add $h$ to $\cH_{\text{new}}$} \Comment{keep hypothesis}
      \State \textbf{return} $\cH_{\text{new}}$
    \EndIf
    \For{\ModBline{each $i \in \cC$}}
      \State $\cS' \gets \mathrm{supp}(\bz) \cup \{i\}$ \Comment{extend support}
      \State $\bz' \gets \argmin\limits_{\bw:\mathrm{supp}(\bw)\subseteq \cS'} \|\bx - \bD \bw\|_2^2$ \Comment{update code}
      \State $\br' \gets \bx - \bD \bz'$ \Comment{update residual}
      \State \ModBline{Add $(\bz', \br', i)$ to $\cH_{\text{new}}$}
    \EndFor
    \State \textbf{return} $\cH_{\text{new}}$
  \end{algorithmic}
\end{algorithm}

Given the hierarchical dictionary in \eqref{eq:full_dictionary}, our goal is to recover the sparse code $\bz$ whose support corresponds to the root-to-leaf path of the generating synset \eqref{eq:hier_sparse_expansion}.
As discussed in \Cref{sec:preliminaries}, standard OMP greedily selects the atom most aligned with the current residual and then recomputes the active coefficients. 
If OMP were applied to our hierarchical dictionary, it would treat all atoms as if they were at the same level of the hierarchy, selecting the most correlated atom at each step without respecting the hierarchy.
Therefore, OMP may select atoms that violate the hierarchical structure (e.g., choosing both \texttt{animal} and \texttt{vehicle} at the same level).

In this work, we exploit the hierarchical structure with a coarse-to-fine search: at each step, we consider only children of the current node, descending one level at a time.
This yields Hierarchical Beam OMP (\Cref{alg:beam-path-hierarchical-omp}), a variant of Hierarchical OMP \cite{jost_tree-based_2006,la_tree-based_2006} augmented  with beam search. HB-OMP maintains a \ModB{set} of \ModA{hierarchically valid} support \ModB{hypotheses}, each a partial root-to-node path, and iteratively extends and prunes them according to reconstruction error. Relative to the standard OMP procedure introduced in \Cref{sec:preliminaries_sparse_coding} and detailed in \Cref{alg:omp}, there are two key modifications:

\begin{enumerate}
    \item \ModA{Path-restricted extension:} only children of the deepest explored node are eligible for selection in \Cref{alg:beam-path-hierarchical-omp}, restricting the search to hierarchically valid supports.
    \item \ModB{Beam search:} the algorithm maintains the top-$B$ hypotheses ranked by residual norm, mitigating error accumulation.
An incorrect early decision (e.g., \texttt{animal} vs.\ \texttt{object}) does not propagate to all lower levels.
\end{enumerate}

\begin{figure}[t]
  \centering
  \includegraphics[width=\linewidth]{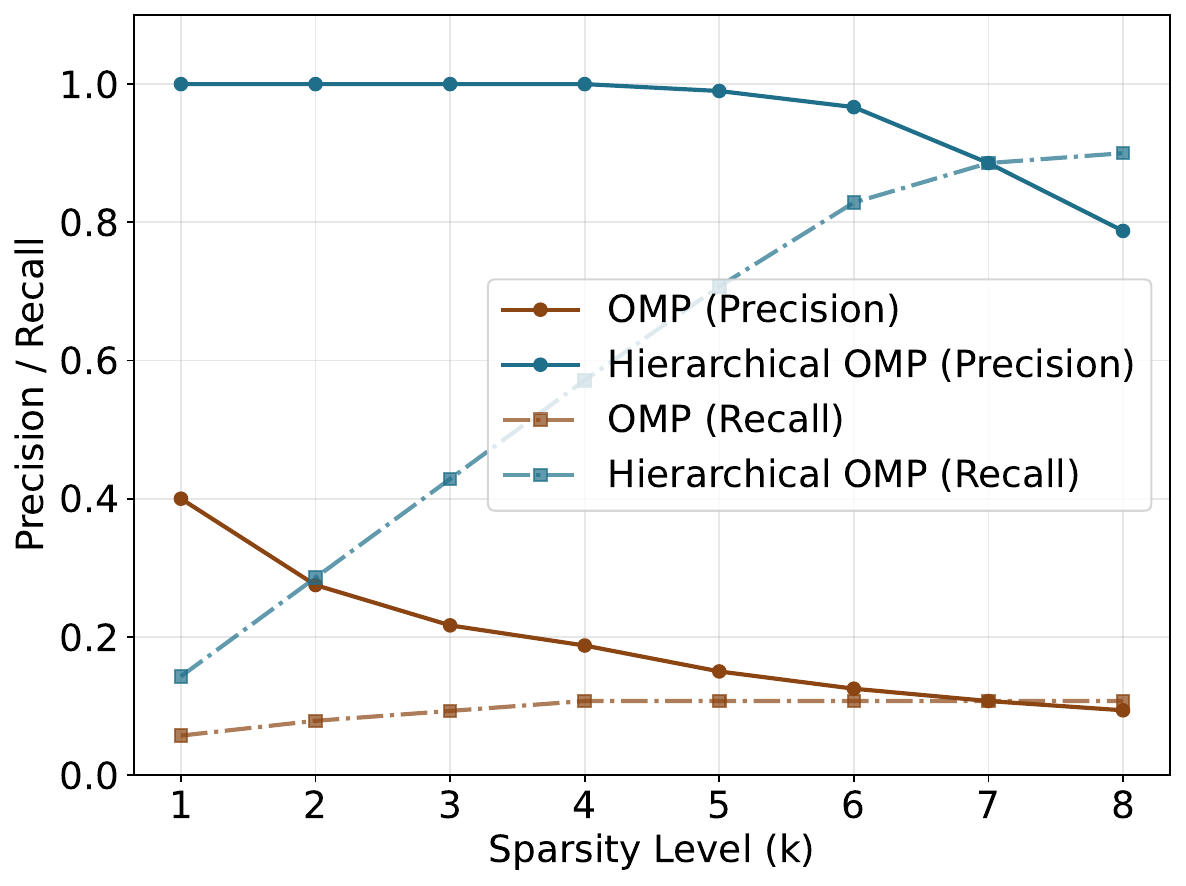}
  \vspace{-0.7cm}
  \caption{Hierarchical OMP has improved support recovery precision and recall compared to standard sparse coding methods on synthetic data.}
  \label{fig:precision_recall_index}
  \vspace{-0.4cm}
\end{figure}

We next give an informal proposition explaining the source of this support-recovery advantage.
See the formal statement and proof in \Cref{app:proof-hier-omp-erc}.

\begin{proposition}[Informal]\label{prop:hier_omp_erc}
Suppose that, at each iteration, HB-OMP extends a hypothesis whose support is a prefix of the true root-to-leaf path. Then its next selection is less likely to introduce an atom outside the true hierarchical support than OMP's selection.
\end{proposition}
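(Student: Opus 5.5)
We will make the informal statement precise through an Exact Recovery Condition (ERC) in the sense of Tropp, comparing OMP and HB-OMP at a single iteration. Let $P=(p_1,\dots,p_L)$ be the true root-to-leaf path, with $\bD_P$ the matrix of atoms on it, so that $\bx=\bD_P\bz_P$ is the noiseless signal. Assume the current hypothesis has support $S=\{p_1,\dots,p_k\}$, a prefix of $P$. Its residual $\br=(\bI-\Pi_S)\bx$, with $\Pi_S$ the orthogonal projector onto $\mathrm{span}(\bD_S)$, lies in the column span of $(\bI-\Pi_S)\bD_{P\setminus S}$. OMP chooses a correct atom precisely when
\begin{equation*}
\max_{j\notin P}|\langle \br,\bd^{(j)}\rangle|/\|\bd^{(j)}\|_2<\max_{i\in P\setminus S}|\langle \br,\bd^{(i)}\rangle|/\|\bd^{(i)}\|_2 .
\end{equation*}
By Tropp's argument, a sufficient condition for every residual in that span is
\begin{equation*}
\mathrm{ERC}(\Omega)=\max_{j\in\Omega}\bigl\|(\tilde\bD_{P\setminus S})^{+}\tilde\bd^{(j)}\bigr\|_1<1,
\end{equation*}
where tildes denote atoms projected by $\bI-\Pi_S$ and normalized. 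The competitor set is $\Omega_{\mathrm{OMP}}=\cA\setminus P$ for OMP. For HB-OMP it is $\Omega_{\mathrm{HB}}=\child(p_k)\setminus\{p_{k+1}\}$, since HB-OMP only scores children of the last node and the only correct child is $p_{k+1}$. For HB-OMP the "true" comparison set is also reduced to the single atom $p_{k+1}$, so the relevant quantity is $|\langle\tilde\bd^{(p_{k+1})},\tilde\bd^{(j)}\rangle|$ together with an analogue restricted to that atom.

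The formal statement we will prove is $\mathrm{ERC}_{\mathrm{HB}}\le \mathrm{ERC}_{\mathrm{OMP}}$. Equivalently, the set of residuals (or noise levels) for which HB-OMP's next pick leaves the true support contains no more than the corresponding set for OMP.

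\begin{itemize}
\item[] \emph{Step 1 (set containment).} Since $\Omega_{\mathrm{HB}}\subseteq\Omega_{\mathrm{OMP}}$, a maximum over a subset is no larger. This is the main monotonicity argument, and it carries over to the noisy case: the event that an erroneous atom beats the best true atom is a union over $\Omega$, so its probability is monotone in $\Omega$ under any noise model.

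\item[] \emph{Step 2 (restricted true set).} HB-OMP compares against only $p_{k+1}$ rather than the whole of $P\setminus S$, so we must handle that asymmetry. We will use the geometry of \Cref{sec:generative_model}. Hierarchical orthogonality makes $\bd^{(p_{m})}\perp \ba^{(p_{m-1})}$, which, by telescoping \eqref{eq:hier_sparse_expansion}, gives $\bd^{(p_m)}\perp\mathrm{span}(\bD_{\{p_1,\dots,p_{m-1}\}})$. Consequently the deeper true atoms are nearly orthogonal to the projected residual direction along $p_{k+1}$, and the residual is dominated by its $p_{k+1}$ component. The sibling simplex condition gives an explicit inner product $-\|\bd\|^2/(b-1)$ between sibling atoms. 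Together with cone disjointness from \Cref{prop:well_clustered_hierarchy}, this bounds the HB-OMP coherence as $|\langle\tilde\bd^{(p_{k+1})},\tilde\bd^{(j)}\rangle|\le 1/(b-1)<1$ for $b\ge 3$, and by the angular gap \eqref{eq:cone_disjoint} when $b=2$.

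\item[] \emph{Step 3 (OMP lower bound).} To show OMP can genuinely be worse, we exhibit competitors in $\Omega_{\mathrm{OMP}}\setminus\Omega_{\mathrm{HB}}$ with large correlation. Off-path descendants of $p_{k+1}$'s siblings, and atoms at other levels nearly parallel to the residual, can push $\mathrm{ERC}_{\mathrm{OMP}}$ above $1$. A small explicit example in the style of \Cref{fig:data_generating_process} will show that the inequality can be strict.
\end{itemize}

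The main obstacle is Step 2: the two ERCs use different true sets, so the clean subset monotonicity of Step 1 does not compare them directly. We will first try to use orthogonality to show that the projected true atoms beyond $p_{k+1}$ contribute only nonnegatively to the relevant correlations. If that fails, we will state the formal result as a comparison of the worst-case mis-selection margin $\max_{j\in\Omega}|\langle\br,\bd^{(j)}\rangle|$ against a common true-atom score. There Step 1 alone suffices, and Step 2 becomes a supporting bound.
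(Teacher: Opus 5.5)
\textbf{Gap in the per-iteration formalization.} Your main formalization has a real gap, but it is not the one you flag. The inequality $\mathrm{ERC}_{\mathrm{HB}}\le\mathrm{ERC}_{\mathrm{OMP}}$ is actually immediate. Write $M=\tilde{\bD}_{P\setminus S}$, which has unit columns, and let $\Pi$ be the projector onto its range. Then $\|M^{+}v\|_1\ge\|\Pi v\|_2\ge|\langle\tilde{\bd}^{(p_{k+1})},v\rangle|$ for every $v$, and $\Omega_{\mathrm{HB}}\subseteq\Omega_{\mathrm{OMP}}$.

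What fails is that this one-atom quantity certifies nothing about HB-OMP. Tropp's bound $\max_{j\in\Omega}|\langle\br,\tilde{\bd}^{(j)}\rangle|\le\mathrm{ERC}\cdot\max_i|\langle\br,\tilde{\bd}^{(i)}\rangle|$ requires $\br$ to lie in the span of the comparator true atoms. Here $\br$ also contains $\tilde{\bd}^{(p_{k+2})},\dots,\tilde{\bd}^{(p_L)}$. Under \eqref{eq:multi_ortho} these are orthogonal to $\ba^{(p_0)},\dots,\ba^{(p_{k+1})}$, and hence to $\bd^{(p_{k+1})}$. However, nothing in the paper's geometry makes them orthogonal to the sibling atoms $\bd^{(s)}$, $s\in\child(p_k)\setminus\{p_{k+1}\}$. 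The construction only orthogonalizes the children of $p_{k+1}$ against the ancestor vectors, and for $b\ge 3$ the sibling simplex of $p_{k+1}$ has $b-2$ directions that remain available to them.

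So ``the residual is dominated by its $p_{k+1}$ component'' is exactly the missing lemma. It would need either an extra orthogonality assumption or a quantitative decay of $|z_m|\,\|\bd^{(p_m)}\|$ along the path. Without it, your ``equivalently'' is false. When such a cross term is nonzero, a residual in the span can give $p_{k+2}$ the top global score while some sibling $s$ beats $p_{k+1}$. Then OMP stays on $P$ and HB-OMP leaves it. Your noisy-case claim in Step 1 fails for the same reason: the two error events use different true comparators, so they are not nested.

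\textbf{Local errors in Step 2, and the paper's route.} First, parent-only orthogonality $\bd^{(p_m)}\perp\ba^{(p_{m-1})}$ does not telescope to $\bd^{(p_m)}\perp\mathrm{span}(\bD_{\{p_1,\dots,p_{m-1}\}})$. You need orthogonality to every ancestor, i.e.\ \eqref{eq:multi_ortho}. Second, for $b=2$ the simplex centred at the parent gives $\bd^{(s)}=-\bd^{(p_{k+1})}$, so HB-OMP's absolute-cosine rule ties exactly. Condition \eqref{eq:cone_disjoint} constrains the synset vectors $\ba$, not the difference atoms, so it cannot break that tie.

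The paper avoids all of this with what is essentially your fallback:
\begin{itemize}
\item It keeps the entire true support (your $P$, its $S$) inside the pseudo-inverse for both methods.
\item It models the hierarchy as a nested restriction $T_L\subseteq\cdots\subseteq T_0$ of the admissible index set, with the true support contained in every $T_\ell$. The comparison then reduces to subset monotonicity of a maximum, which is your Step 1.
\item It applies Tropp's theorem to OMP on $\widehat{\bD}_{T_L}$.
\item It gets strictness from an instance whose ERC maximizer lies outside $T_L$, which is your Step 3.
\end{itemize}
If you adopt that formalization, Step 2 becomes unnecessary. Be aware, though, that it certifies OMP on a fixed restricted dictionary containing the whole path, not the child-by-child selection of HB-OMP. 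That is precisely the asymmetry you ran into.
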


This result supports our choice of using beam search to explore multiple hypotheses, as it increases the likelihood of recovering the correct hierarchical support. Beam search also mitigates error accumulation in the top-down search: by maintaining multiple hypotheses at each level, an incorrect early decision (e.g., \texttt{animal} vs. \texttt{object}) does not necessarily propagate to all lower-level explanations.

\begin{figure}[t]
  \centering
  \includegraphics[width=0.48\linewidth]{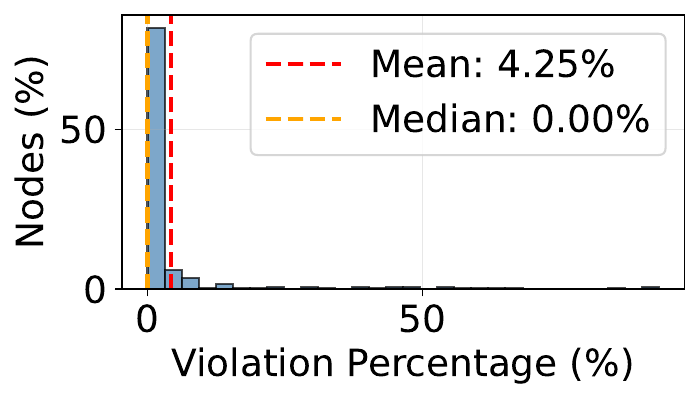}
  \includegraphics[width=0.48\linewidth]{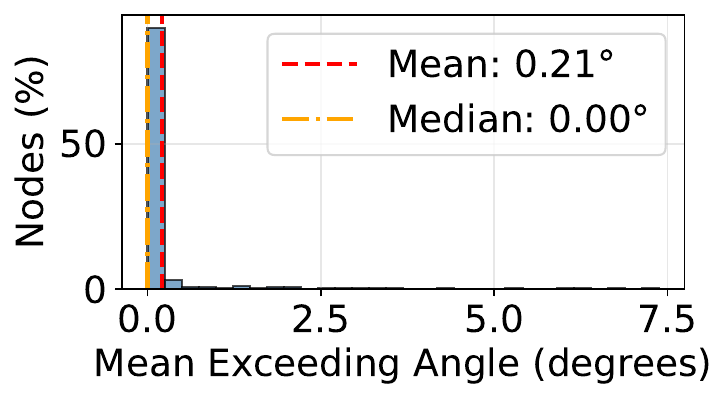}
  \vspace{-0.3cm}
  \caption{\textbf{CLIP image embeddings for ImageNet are tightly clustered.} (Left): For a fixed node $i$, we show the proportion of non-descendants of $i$ whose embeddings intersect the cone of $i$. (Right): For the same node $i$, we show the mean angle violation, with non-violating angles counted as $0$.}
  \label{fig:well_clustered_branches}
  \vspace{-0.2cm}
\end{figure}

\begin{figure}[t]
  \centering
  \includegraphics[width=\linewidth]{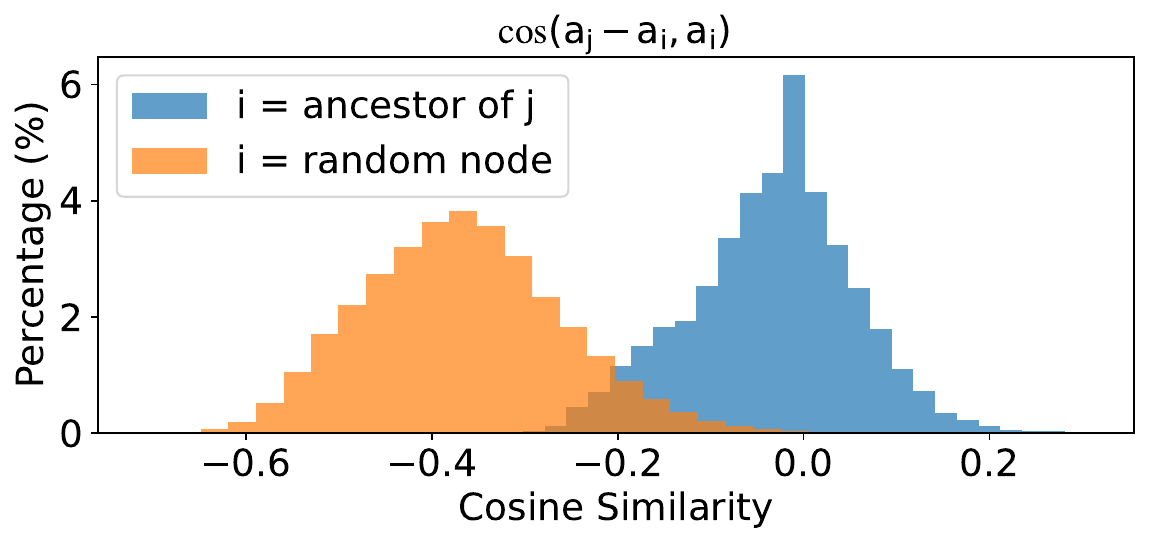}
  \vspace{-0.7cm}
  \caption{\textbf{Observed hierarchical orthogonality in CLIP image embeddings for ImageNet.} The cosine similarity between child-parent difference vectors and their parents is close to zero, while random pairings have significantly higher cosine similarity. This suggests that hierarchical orthogonality \citep{park_geometry_2025} holds even in pre-trained models.} 
  \label{fig:orthogonality_test_imagenet}
  \vspace{-0.3cm}
\end{figure}

\section{Experiments}\label{sec:experiments}
In this section, we show that \ours{} improves hierarchical concept recovery on both synthetic (\Cref{sec:synthetic_experiments}) and real-world datasets (\Cref{sec:real_experiments}). Across settings, the main empirical pattern is consistent: HB-OMP recovers the correct support more accurately than OMP, and these gains are especially pronounced in few-shot regimes where the hierarchy provides a strong inductive bias.

\begin{figure*}[t]
  \centering
  \includegraphics[width=\textwidth,clip,trim={0 1cm 0 0}]{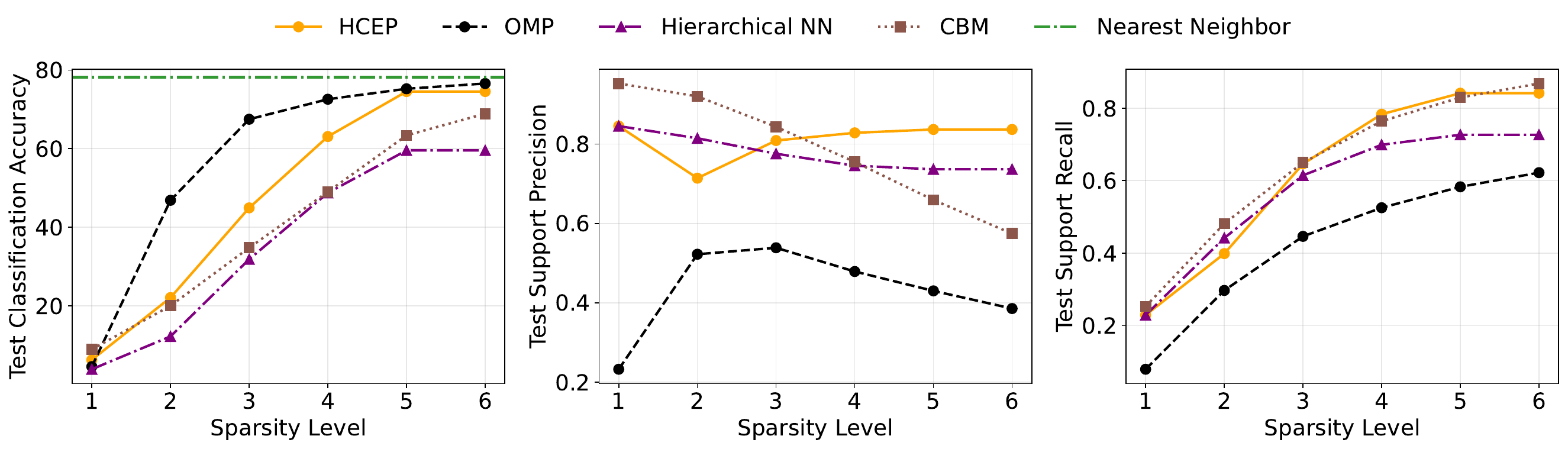}
  \includegraphics[width=\textwidth,clip,trim={0 0 0 1.2cm}]{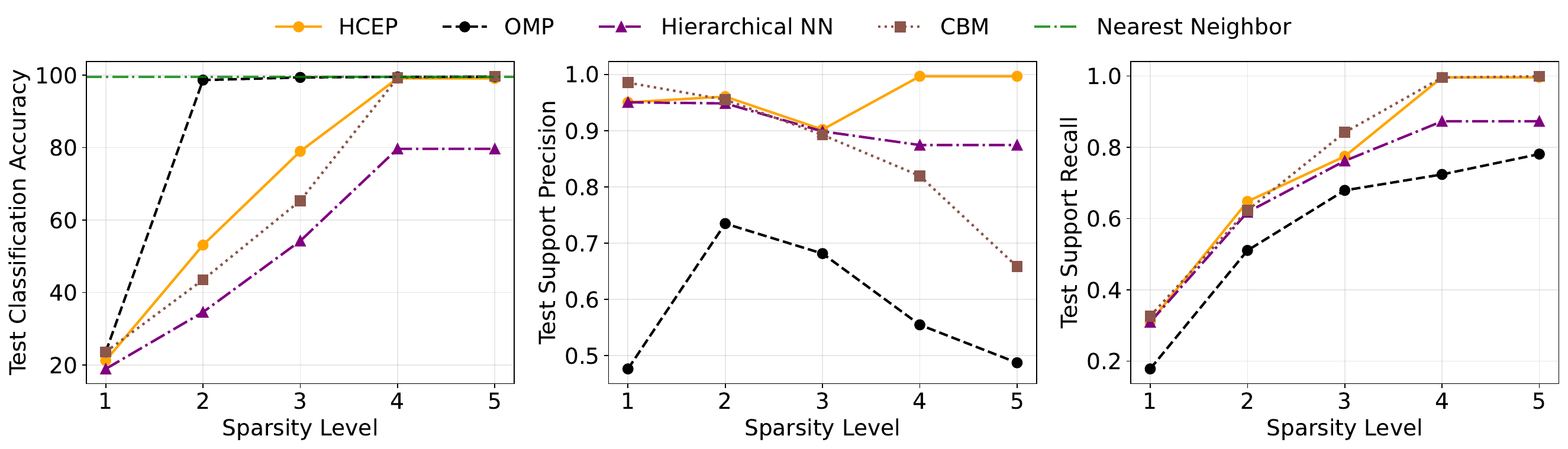}
  \vspace{-0.7cm}
  \caption{Interpretable image classification on CIFAR-100 (top) and ImageNette (bottom). \ours{} achieves state-of-the-art support precision and recall while maintaining comparable classification accuracy at a sparsity level matching the depth of the hierarchy ($5$ for CIFAR-100, $4$ for ImageNette).}
  \label{fig:cifar100_results}
  \vspace{-0.4cm}
\end{figure*}

\begin{figure*}[t]
  \centering
  \includegraphics[width=\textwidth]{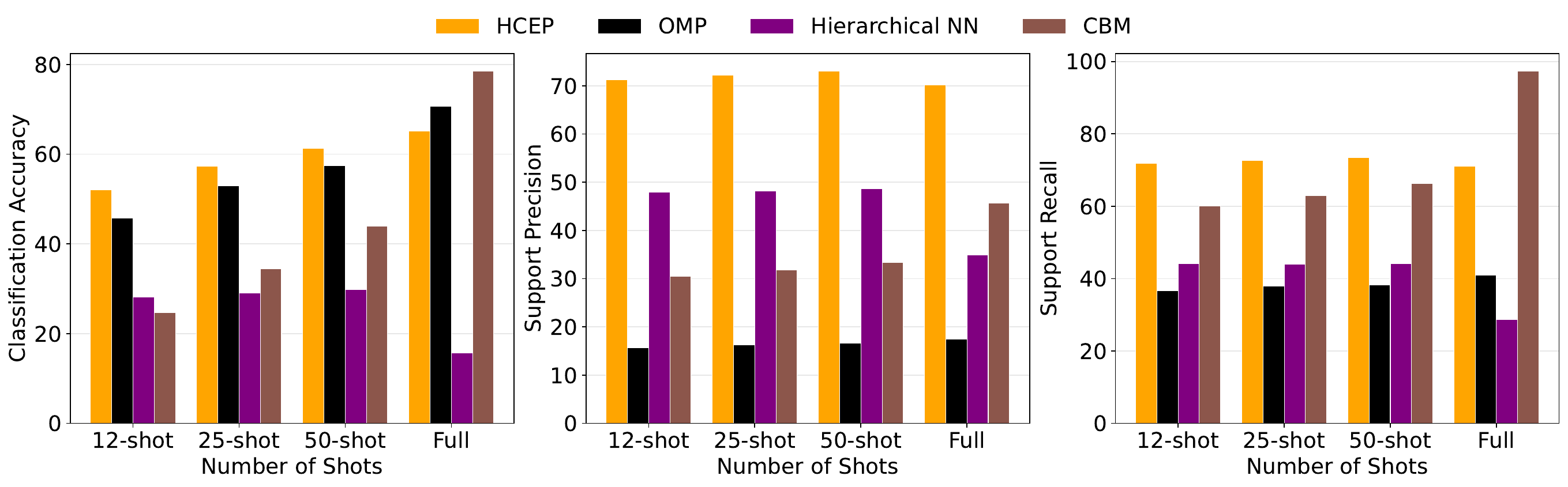}
  \vspace{-0.7cm}
  \caption{As we reduce the number of images per class in ImageNet, \ours{} consistently improves test classification accuracy, support precision, and support recall over all baselines. Results are shown at sparsity level 14.}
  \label{fig:imagenet_low_data_results}
  \vspace{-0.4cm}
\end{figure*}

\subsection{Synthetic Experiments}\label{sec:synthetic_experiments}

We first compare the performance of HB-OMP (\Cref{alg:beam-path-hierarchical-omp}) with OMP (\Cref{alg:omp}) on synthetic data generated from the Hierarchical Concept Embeddings model in \Cref{sec:generative_model}. We evaluate the reconstruction error and the recovery of the ground-truth sparse support (i.e., the path from the root to the leaf node) under varying noise levels and hierarchy depths.

We choose a branching ratio of $b=3$, a hierarchy depth of $L=7$, and a dimension $d=50$. Note that this dimension satisfies the depth-banch-dimension condition in \Cref{prop:depth_dimension} since $d=50 \ge 7 + 3 = 10$. This gives us $2,187$ leaf synsets and $3,280$ atoms in the dictionary. We generate 5 samples per leaf for a total of $10,935$ samples. Further details on how the synthetic data is generated can be found in \Cref{app:step_by_step_construction}, and the hyperparameters can be found in \Cref{app:synthetic_experiment_details}.

As it can be observed in \Cref{fig:precision_recall_index}, HB-OMP  consistently outperforms OMP in both precision and recall. This demonstrates the effectiveness of incorporating hierarchical structure into sparse coding for improved concept recovery.

\subsection{Real-World Experiments}\label{sec:real_experiments}

In this section, we evaluate \ours{} on real-world image classification tasks. We first describe how we construct the hierarchy and dictionary, then present the experimental setting and results. More details are provided in \Cref{app:real_data_experiment_details}.

\myparagraph{Datasets} We evaluate on (1) ImageNette \cite{howard2019imagnette}, (2) ImageNet \cite{deng2009imagenet}, and (3) CIFAR-100 \cite{krizhevsky2009learning}. For ImageNet-based datasets, we use the WordNet hierarchy, which has $L=14$ levels with branching ratios up to $b=25$, demonstrating that \ours{} can handle complex, large-scale hierarchies. For CIFAR-100, we use taxonomy induction methods \cite{zeng2024chain} to construct a hierarchy over the classes.

\myparagraph{Hierarchy and Dictionary Construction} We obtain the embedding of a class, which is a leaf node, as the average of the CLIP image embeddings for that class. We obtain the embedding of non-leaf synsets as the average of their children's embeddings. We construct the hierarchical dictionary as in \eqref{eq:full_dictionary} and keep it fixed throughout the experiments.

\myparagraph{Baselines} 
(1) OMP \cite{pati1993orthogonal,chattopadhyay_information_2023} with the full dictionary; (2) CBMs \cite{koh2020concept} which use supervised concept annotations to learn a concept predictor; (3) a Nearest Neighbor (NN) classifier that uses the class synset embeddings directly, which can be viewed as a black-box classifier; and (4) Hierarchical NN (HNN), which traverses the hierarchy using nearest neighbor search at each level and serves as a hierarchical interpretable-by-design baseline.

\myparagraph{Evaluation Metrics} We evaluate the models based on (1) classification accuracy; (2) support precision and recall, which measure how well the recovered sparse support matches the ground-truth path in the hierarchy. Note that we do \emph{not} assume that each leaf has a unique root-to-leaf path: in DAG-structured hierarchies such as WordNet, a synset may have multiple parents, yielding several valid paths. \ours{} explores and selects one of these paths, and our evaluation scores the recovered support against the closest among all valid root-to-leaf paths.

\myparagraph{Classification Procedure} For each image, we compute its CLIP embedding, and each method produces its own  intermediate representation: (1) OMP and HB-OMP recover a sparse code; (2) HNN recovers a 0/1 sparse code by going down a rooted path in the hierarchy using nearest neighbor search at each level and record the nodes along the path in the sparse code; and (3) CBMs produce scores for all atoms at once by training a classifier on top of the CLIP image embeddings. The recovered intermediate representations are then fed to a linear classifier trained on the training set to predict the class labels. For NN, we directly use the class synset embeddings to classify images without any intermediate representation.

\myparagraph{Results} See the results on ImageNet (\Cref{fig:imagenet_low_data_results}) and CIFAR-100/ImageNette (\Cref{fig:cifar100_results}). Across all three datasets, HB-OMP achieves higher support precision and recall than the other interpretable baselines, indicating more accurate recovery of the relevant hierarchical concepts. These gains are more pronounced in the few-shot ImageNet setting, where HB-OMP outperforms all interpretable baselines in both classification accuracy and support precision/recall. 

This pattern is consistent with our hypothesis that sparse coding methods are especially helpful when concepts must be estimated from few labeled examples: OMP and HB-OMP only require estimating synset embeddings by averaging image embeddings, whereas CBMs must learn a concept classifier from labels.
This few-shot scenario is common in real-world applications where labeled data is scarce but a pre-trained vision-language model can provide rich embeddings.
Moreover, restricting the search to hierarchically valid paths further reduces spurious atom selections, so HB-OMP performs better than OMP in few-shot settings. 

We also verified that \ours{} generalizes across vision-language models: replacing CLIP with SigLIP \citep{zhai2023sigmoid} on ImageNet yields similar improvements in interpretability metrics (see \Cref{fig:imagenet_siglip_results}).

\myparagraph{Well-Clustered Synsets and Hierarchical Orthogonality in Real-World Datasets} 
We empirically verify the geometric conditions described in \Cref{sec:generative_model} on real-world datasets. We focus on ImageNet \citep{deng2009imagenet} and use CLIP image embeddings as the representation space for the synsets and concepts.
For the well-clustered synset condition (\Cref{prop:well_clustered_hierarchy}), \Cref{fig:well_clustered_branches} shows that most branches are tightly clustered and well separated from other branches.
To test the validity of the hierarchical orthogonality condition in \Cref{sec:hierarchical_independence}, we measure the cosine similarity between child-parent difference vectors and their parent embeddings. As shown in \Cref{fig:orthogonality_test_imagenet} for ImageNet (see \Cref{fig:orthogonality_test_cifar100} for CIFAR-100), the child-parent difference vectors are close to orthogonal to the parent vectors, whereas random pairings are not.

\myparagraph{Runtime Analysis} \Cref{fig:runtime_vs_metrics} reports a runtime analysis using the same setting as \Cref{fig:imagenet_low_data_results}. As \ours{}'s beam width increases from $1$ to $32$, support precision improves with only a modest runtime overhead, enabled by parallelizing the beam-search hypotheses on GPU.

\begin{figure}[t]
\centering
\includegraphics[width=\linewidth]{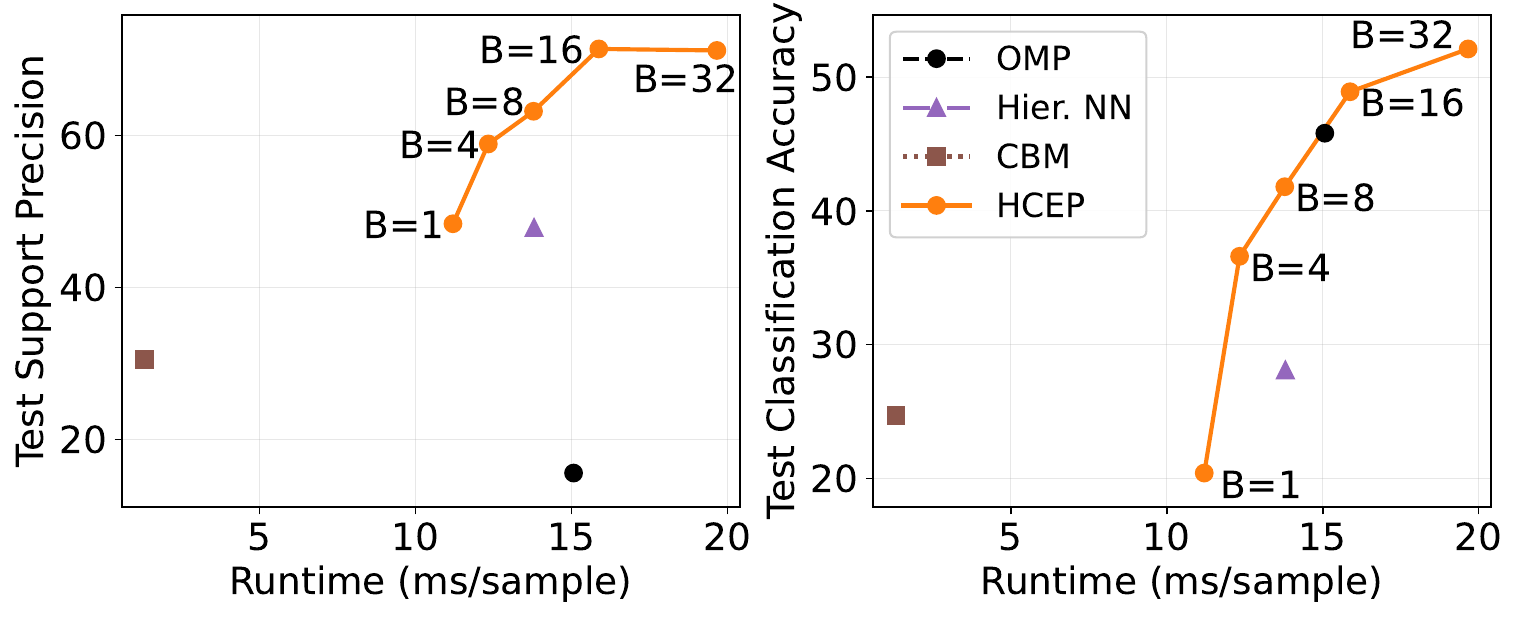}
\vspace{-0.7cm}
\caption{Runtime, support precision and classification accuracy on ImageNet. All methods use $12$-shot and sparsity level $14$.}
\vspace{-0.5cm}
\label{fig:runtime_vs_metrics}
\end{figure}

\section{Conclusion}\label{sec:conclusion}

We introduced \ours{}, a geometric framework for hierarchical concept embeddings and a pursuit algorithm that respects the structure of synset hierarchies. We analyzed identifiability requirements through well-clustered cones, hierarchical orthogonality, and simplex structure, along with support-recovery guarantees for Hierarchical OMP. Empirically, the resulting codes deliver better support precision and recall than interpretable baselines across synthetic and real-world benchmarks.
Our findings highlight the promise of structured sparse coding as a scalable and flexible framework for interpretable machine learning.

\section*{Acknowledgements}
The authors thank Ryan Chan, Ryan Pilgrim, Uday Kiran Tadipatri, Buyun Liang, Kyle Poe, and anonymous reviewers for their helpful comments and suggestions. This work was funded by Simons Foundation grant 814201, NSF grant 2031985, and University of Pennsylvania startup funds.

{
    \small
    \bibliographystyle{ieeenat_fullname}
    \bibliography{main,nghia}
}

\clearpage
\appendix
\tableofcontents

\section{Extended Related Work}\label{app:related_work}
\myparagraph{Interpretable-by-design models} Interpretable-by-design models aim to provide explanations for their predictions by using human-interpretable concepts as intermediate representations. Early works explored attribute-based classification for face verification \cite{kumar2009attribute} and learning to detect unseen object classes through attribute transfer \cite{lampert2009learning}. Subsequent works include Concept Activation Vectors \cite{kim2018interpretability}, which use linear classifiers to identify directions in the embedding space corresponding to specific concepts. Concept Bottleneck Models \cite{koh2020concept} extend this idea by training a model to predict concepts before predicting the final output. In an adjacent line of work, Information Pursuit \cite{geman1996active} is used as a criterion to choose the most relevant concepts \cite{chattopadhyay2023variational,chattopadhyay2024bootstrapping,kolek2025learning}. More recent works have explored leveraging pre-trained embeddings and sparse coding for identifying specific concept directions \cite{chattopadhyay_information_2023,bhalla_interpreting_2024,gandelsman_interpreting_2024-1}. Our work builds upon these foundations by introducing a concept embedding framework that captures the hierarchical relationships among synsets in interpretable image classification.

\myparagraph{Sparse Recovery}
Sparse recovery aims to recover a sparse signal from a set of observations, often using techniques such as Orthogonal Matching Pursuit (OMP) \cite{pati1993orthogonal} and Basis Pursuit \cite{chen2001atomic}. 
Sparse coding has been widely used in image processing \cite{mairal2008learning,mairal2010online}, signal processing, and machine learning.
Although there have been works on hierarchical sparse coding \cite{jost_tree-based_2006, jenatton2011proximal,la_tree-based_2006}, they do not consider the hierarchical structure of concepts in the context of interpretable models or deep representation learning.
More recently, sparse autoencoders (SAEs) \cite{ng2011sparse,cunningham2023sparse,bricken2023monosemanticity} have been used in vision-language models \citep{zaigrajew_interpreting_2025,olson_analyzing_2025,costa_flat_2025} to interpret the structure of concepts beyond sparsity. Unlike SAE-based approaches, our work focuses on structured sparse coding with an explicitly hierarchical dictionary derived from semantic synset relationships.

\section{Preliminaries}
\subsection{Interpretable-by-design vs. Chain-of-Thought}\label{sec:interp_design_vs_cot}
Chain-of-Thought (CoT) prompting \citep{wei2022chain} is a technique that enables large language models to generate step-by-step reasoning traces before producing a final answer, often improving performance on complex reasoning tasks.
However, using CoT as an interpretability method does not guarantee faithfulness 
\footnote{An explanation is \emph{faithful} if it accurately reflects the true computation process to the final prediction \citep{jacovi_goldberg_2020_towards}.}
because the final prediction is conditioned on both the input and the generated chain of thoughts. Several recent audits further question the faithfulness of CoT explanations \citep{barezChainofThoughtNotExplainability,lanhamMeasuringFaithfulnessChainofThought2023,turpinLanguageModelsDont2023}. 
Therefore, the development of CoT does not make interpretable-by-design models obsolete. In fact, there is a trend toward making foundation models (e.g., LLMs or diffusion models) interpretable-by-design by enforcing an interpretable concept bottleneck in the latent space \citep{sunConceptBottleneckLLM2025,ismailConceptBottleneckGenerative2024}.

\subsection{Canonical regular simplex}\label{sec:canonical_simplex}
Define
\begin{equation}\label{eq:simplex_definition}
  \tilde{\bs}_{j}
  \;=\;
  \be_{j}
  \;-\;
  \frac{1}{b}\,\mathbf 1,
  \qquad
  j = 1,\ldots,b,
\end{equation}
where $\{\be_{j}\}_{j=1}^{b}$ are the standard basis vectors of $\R^{b}$ and
$\mathbf 1\in\R^{b}$ is the all-ones vector.
These centred vertices satisfy
$\sum_{j=1}^{b}\tilde{\bs}_{j}=\mathbf0$ and
\(
  \tilde{\bs}_{j}^{\!\top}\tilde{\bs}_{k}
  =
  \begin{cases}
    1, & j=k,\\[4pt]
    -\dfrac{1}{b-1}, & j\neq k,
  \end{cases}
\)
i.e.\ they form a regular $(b-1)$-simplex of unit edge length in~$\R^{b-1}$.

\section{Limitations}
While our framework demonstrates clear advantages in concept recovery, it also has several limitations:

\myparagraph{Embedding Constraints} Our theoretical analysis (\Cref{prop:depth_dimension}) establishes that embedding a hierarchy with leaf depth $L$ and branching ratio $b$ requires ambient dimension $d \ge L + b - 1$. For deep hierarchies (large $L$) or highly branching structures (large $b$), this constraint becomes restrictive. Real-world embeddings from models such as CLIP typically have fixed dimensions (e.g., $d=768$), which limits the depth and complexity of hierarchies that can be faithfully represented. Moreover, as hierarchies deepen, the half-angles of the cones containing each subtree (\Cref{prop:angle_schedule}) must decrease geometrically. As mentioned in \Cref{sec:well_clustered_synsets}, this limitation may necessitate exploring alternative geometries, such as hyperbolic spaces \citep{nickel2017poincare,desai_hyperbolic_2024}, for more faithful hierarchical representations. At the same time, by using pretrained embeddings, \ours{} provides accurate and interpretable classification without the additional compute required to finetune large models, and pretrained models are available and improving across multiple domains, making \ours{} easy to extend. That said, hierarchy-aware finetuning could further improve the geometric conditions in \Cref{sec:generative_model} and thus boost \ours{}'s performance; we leave this direction to future work.

\myparagraph{Hierarchy Quality Dependence} The performance of Hierarchical OMP critically depends on the quality of the predefined hierarchy. For ImageNet-based datasets, we leverage the well-curated WordNet hierarchy, which provides semantically meaningful relationships. However, for CIFAR-100, we rely on taxonomy induction methods~\citep{zeng2024chain}, which may produce hierarchies with inconsistencies or unclear relationships. That said, high-quality hierarchies already exist in many domains beyond common objects, e.g., RadLex \citep{langlotz2006radlex} for radiology, DERM12345 \citep{yilmaz2024derm12345} for skin lesions, and iNaturalist \citep{van2018inaturalist} for species classification. Moreover, the quality of LLM-based taxonomy induction is steadily improving, as demonstrated by our CIFAR-100 experiments (\Cref{sec:real_experiments}). When the hierarchy is noisy, \ours{} may produce incorrect explanation paths. A promising direction for future work is to interpolate between hierarchical and non-hierarchical solutions via $\ell_1$ minimization with a hierarchy regularizer \citep{jenatton2011proximal}, thereby allowing the method to degrade gracefully when hierarchy quality is uncertain.

\myparagraph{Computational Complexity} Hierarchical OMP with beam search (\Cref{alg:beam-path-hierarchical-omp}) has complexity $O(TBb|\mathcal{D}_{\text{active}}|)$, where $T$ is the number of iterations, $B$ is the beam width, $b$ is the branching ratio, and $|\mathcal{D}_{\text{active}}|$ is the size of the active dictionary at each level. In contrast, OMP has complexity $O(T|\mathcal{D}|)$, where $|\mathcal{D}|$ is the size of the entire dictionary. For large branching ratios or deep hierarchies, this may become computationally expensive. 
While we demonstrate better concept recovery accuracy over standard OMP, the computational cost remains a practical consideration for deployment at scale. In practice, the beam search hypotheses can be parallelized on GPU, which significantly reduces wall-clock time overhead (see \Cref{fig:runtime_vs_metrics}).

\myparagraph{Outlier Handling} \ours{} assumes that the input belongs to a leaf class in the provided hierarchy. For outliers within a known class (e.g., a cat with three legs), classical results on the robustness of sparse coding to corruption \citep{wright2009robust} suggest that the outlier will still be closest to the correct synset path, while a larger sparse reconstruction error can flag such cases. If a novel class is entirely absent from the hierarchy, taxonomy induction can be used to append the new class before applying \ours{}.

\section{Proofs}\label{app:proofs}

\subsection{Proof of \Cref{prop:well_clustered_hierarchy}}\label{app:proof-well-clustered-hierarchy}

\myparagraph{Statement}\quad 
If subtree containment (Eq.~\eqref{eq:subtree_well_clustered}) and sibling-cone disjointness (Eq.~\eqref{eq:cone_disjoint}) hold, 
then the subtrees rooted at sibling nodes do not overlap.

\begin{proof}
Suppose $j$ and $j'$ are distinct children of a parent node $i$. We show that their subtrees are disjoint.

Let $k \in \desc(j)$. By subtree containment in Eq.~\eqref{eq:subtree_well_clustered},
\begin{equation}
  \angle(\ba^{(j)}, \ba^{(k)}) \leq \theta_{\lev(j)}.
\end{equation}

Assume for contradiction that $k \in \desc(j')$ as well. Applying Eq.~\eqref{eq:subtree_well_clustered} to $j'$ gives
\begin{equation}
  \angle(\ba^{(j')}, \ba^{(k)}) \leq \theta_{\lev(j')}.
\end{equation}
Now consider the spherical triangle formed by the unit vectors
$\ba^{(j)}/\|\ba^{(j)}\|$, $\ba^{(k)}/\|\ba^{(k)}\|$, and $\ba^{(j')}/\|\ba^{(j')}\|$.
The triangle inequality yields
\begin{align}
  \angle(\ba^{(j)}, \ba^{(j')}) 
  &\leq \angle(\ba^{(j)}, \ba^{(k)}) + \angle(\ba^{(k)}, \ba^{(j')})
  \notag \\ 
  &\leq \theta_{\lev(j)} + \theta_{\lev(j')},
\end{align}
which contradicts sibling-cone disjointness in Eq.~\eqref{eq:cone_disjoint}. Therefore $k$ cannot belong to both $\desc(j)$ and $\desc(j')$. Since $j$ and $j'$ were arbitrary siblings, the subtrees rooted at sibling nodes are disjoint.
\end{proof}

\subsection{Proof of \Cref{prop:angle_schedule}}\label{app:proof-angle-schedule}

\myparagraph{Statement}\quad 
If the half-angles satisfy $\theta_{l+1} \le \min\{r,1/b\}\,\theta_l$ with $r\in(0,1/2)$, then there exists a placement of the nodes such that \textit{subtree containment} (Eq.~\eqref{eq:subtree_well_clustered}) and \textit{sibling-cone disjointness} (Eq.~\eqref{eq:cone_disjoint}) hold.

\begin{proof}
We verify the two requirements separately.

\textbf{Step 1: Subtree containment.}
A sufficient condition for Eq.~\eqref{eq:subtree_well_clustered} is that the cumulative half-angles of all lower levels do not exceed the half-angle budget at level $l$, namely
\begin{equation}\label{eq:bounded_cumulative_half_angles}
  \sum_{k=l+1}^L \theta_k \leq \theta_l,
  \quad \forall i \in \{1, \ldots, N_{L}\}, \; l= \lev(i).
\end{equation}
Assume first that the half-angles satisfy the geometric decrease
\begin{equation}\label{eq:geometric_decrease}
  \theta_{l+1} \le r\,\theta_l.
\end{equation}
Then, for any level $l$,
\begin{align}\label{eq:sum_half_angles}
  \sum_{k=l+1}^L \theta_k
  &\le \sum_{k=1}^{L-l} \theta_l\, r^k \text{(by Eq.~\eqref{eq:geometric_decrease})}
  \\&= \theta_l\,\sum_{k=1}^{L-l}  r^k
  \\&= \theta_l\, r\, \sum_{k=0}^{L-l-1} r^k
  \\&\le \theta_l\, r \, \frac{1-r^{L-l-1}}{1-r} \text{(sum of a geometric series)}.
\end{align}
Taking $L\to\infty$ yields
\begin{align*}\label{eq:infinite_sum_half_angles}
  	\theta_l\, r \frac{1-r^{L-l-1}}{1-r} \to \theta_l\, \frac{r}{1-r} \le \theta_l, \qquad r < \frac{1}{2},
\end{align*}
so Eq.~\eqref{eq:bounded_cumulative_half_angles} holds. This proves subtree containment.

\textbf{Step 2: Sibling-cone disjointness.}
Next, we show that Eq.~\eqref{eq:cone_disjoint} follows from the simpler sufficient condition
\begin{equation}\label{eq:cone_packing_sufficient}
  \theta_{l+1} \le \frac{1}{b}\,\theta_l.
\end{equation}
Consider any 2D plane containing the parent cone axis. In that plane, a cone of half-angle $\alpha$ appears as a planar angle of magnitude $2\alpha$. Therefore, one can place $b$ child cone axes inside the parent cone without overlap whenever $b$ child angles of size $2\theta_{l+1}$ fit inside the parent angle $2\theta_l$, i.e., whenever $b\theta_{l+1}\le\theta_l$. This is exactly Eq.~\eqref{eq:cone_packing_sufficient}.

Since the assumption of the proposition is $\theta_{l+1} \le \min\{r,1/b\}\theta_l$, both Step 1 and Step 2 hold simultaneously. Hence there exists a placement satisfying subtree containment and sibling-cone disjointness.
\end{proof}

\subsection{Proof of \Cref{prop:depth_dimension}}\label{app:proof-depth-dimension}

\noindent\textbf{Statement.}\quad Under the hierarchical orthogonality constraints in Eq.~\eqref{eq:multi_ortho} and the regular-simplex difference condition in Eq.~\eqref{eq:multi_simplex_ip} at every internal node up to depth $L-1$ of a hierarchy whose leaves are at depth $L$ in ambient space $\R^d$, it is necessary that the ambient dimension satisfies the depth--dimension condition $d \ge L + b - 1$.

\begin{proof}
Fix a level $l\ge 0$ and consider the ancestor path $\bA_l=\{\ba^{(\pi_0)},\ldots,\ba^{(\pi_l)}\}$. The hierarchical orthogonality constraints in Eq.~\eqref{eq:multi_ortho} define the affine feasible set for child candidates:
\begin{align*}
  \cV_l = \{\bx\in\R^d : \bA_l^{\top}\bx = \bh_l\},
\end{align*}
which is Eq.~\eqref{eq:Vl_def}. When the ancestor vectors are linearly independent, as is generically the case because each level introduces a new non-collinear direction, we have
\begin{align*}
  \dim \cV_l = d - (l+1).
\end{align*}

Now apply the regular-simplex condition in Eq.~\eqref{eq:multi_simplex_ip}. It requires placing $b$ child points whose differences relative to a feasible origin in $\cV_l$ form a regular $(b{-}1)$-simplex. Since such a simplex has affine hull of dimension $b{-}1$, it can be embedded in $\cV_l$ only if
\begin{align*}
  \dim \cV_l \;\ge\; b-1.
\end{align*}
Combining the two displays yields, for every level $l$,
\(
  d - (l+1) \ge b - 1 \;\Longleftrightarrow\; d \ge l + b.
\)
Requiring this inequality to hold at the deepest internal level $l=L-1$ gives $d \ge (L-1) + b = L + b - 1$, as claimed.
\end{proof}

\subsection{Intermediate results for \Cref{prop:hier_omp_erc}}

\begin{lemma}[Column normalization equivalence]
\label{lem:normalization}
Let $\bD=[\bd_1,\dots,\bd_k]\in\R^{d\times k}$ with arbitrary nonzero column norms ($\|\bd_j\|_2>0$ for all $j\in[k]$), and define the diagonal matrix
$\bW:=\mathrm{diag}(\|\bd_1\|_2,\dots,\|\bd_k\|_2)$ and the column-normalized dictionary
$\widehat{\bD}:=\bD \, \bW^{-1}$.
For any $s$-sparse $\bz\in\R^{k}$ with support $S$, set $\widehat{\bz}:=\bW\bz$.
Then $\bx=\bD\bz=\widehat{\bD}\,\widehat{\bz}$ and $\mathrm{supp}(\widehat{\bz})=S$.
Moreover, OMP run on $\bD$ with the selection rule
\begin{equation}
  j^\star \,\in\, \arg\max_{j} \; \frac{\big|\langle \br,\bd_j\rangle\big|}{\|\bd_j\|_2}
  \,=\, \arg\max_{j} \; \frac{\big|\langle \br,\bd_j\rangle\big|}{\|\bd_j\|_2\,\|\br\|_2}
\end{equation}
is identical (same index picked at every iteration) to OMP run on $\widehat{\bD}$ with the usual (unnormalized) correlation rule.
Equivalently, this selects the atom with the highest absolute cosine similarity to the residual.
\end{lemma}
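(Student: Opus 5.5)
The argument is a direct substitution that uses only the diagonal structure of $\bW$. There are three parts.

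\textbf{Representation and support.} Write $\widehat{\bD}\,\widehat{\bz} = \bD\bW^{-1}\bW\bz = \bD\bz = \bx$. Since $\bW$ is diagonal with strictly positive entries $\|\bd_j\|_2$, we have $\widehat{z}_j = \|\bd_j\|_2 z_j$. Hence $\widehat{z}_j \neq 0$ if and only if $z_j\neq 0$, so $\mathrm{supp}(\widehat{\bz})=S$.

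\textbf{Equivalence of the two OMP runs.} I will argue by induction on the iteration $t$. The claim is that both runs have the same residual $\br^{(t)}$ and the same support $\cS^{(t)}$ at every step, and that their coefficient vectors satisfy $\widehat{\bz}^{(t)} = \bW\bz^{(t)}$.

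\emph{Base case.} At $t=0$ both residuals equal $\bx$ and both supports are empty.

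\emph{Selection step.} The columns of $\widehat{\bD}$ are $\widehat{\bd}_j = \bd_j/\|\bd_j\|_2$. Therefore $|\langle \br,\widehat{\bd}_j\rangle| = |\langle\br,\bd_j\rangle|/\|\bd_j\|_2$, so the two selection criteria coincide as functions of $j$. If the residuals agree, the argmax sets agree. With a common deterministic tie-breaking rule, the same index $j^\star$ is picked and the supports stay equal.

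\emph{Least-squares step.} For a fixed support $\cS$, the map $\bw\mapsto\bW\bw$ is a bijection between vectors supported on $\cS$ and vectors supported on $\cS$. It satisfies $\widehat{\bD}(\bW\bw)=\bD\bw$. The two least-squares problems therefore have the same feasible set of fitted vectors $\{\bD\bw : \mathrm{supp}(\bw)\subseteq\cS\}$, which is the span of the selected columns. Both fitted vectors equal the orthogonal projection of $\bx$ onto that span, so the updated residuals coincide. This holds even if the selected columns are linearly dependent, because the projection is unique. The coefficient vectors correspond via $\bW$ whenever the minimizer is unique, and otherwise one can choose them to correspond.

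\emph{Termination.} Because the residual norms are equal at every step, the stopping test $\|\br\|_2<\epsilon$ triggers at the same iteration in both runs. The returned index $t^\star$ is also the same.

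\textbf{The second displayed equality and the cosine interpretation.} At a fixed iteration $\br$ is fixed, and $\|\br\|_2>0$ whenever the loop has not stopped. Dividing every objective value by the same positive constant $\|\br\|_2$ does not change the argmax. The resulting quantity is exactly the absolute cosine similarity between $\br$ and $\bd_j$, which is the rule used in \Cref{alg:omp}.

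\textbf{Main obstacle.} The only delicate point is the tie-breaking and uniqueness bookkeeping. I will handle it by fixing a common tie-breaking rule for both runs, and by stating the correspondence in terms of residuals and supports rather than coefficient vectors. Residuals and supports are invariant even when least-squares minimizers are not unique.
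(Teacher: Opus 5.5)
Your proposal is correct and follows the same route as the paper's proof. You use the identity $\widehat{\bD}\widehat{\bz}=\bD\bW^{-1}\bW\bz$, with positivity of the diagonal of $\bW$ giving the support, and the observation $\langle\br,\widehat{\bd}_j\rangle=\langle\br,\bd_j\rangle/\|\bd_j\|_2$ for the selection step. You then use invariance of the column span, hence of the orthogonal projector and the residual, under diagonal rescaling, and close by induction over iterations. Your extra bookkeeping on tie-breaking, non-unique least-squares minimizers, and the stopping/return rule makes explicit what the paper leaves implicit, but it does not change the argument.
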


\begin{proof}
The representation identity is immediate:
\begin{equation}
\bx=\bD\bz=\bD\,\bW^{-1}\bW\bz=\widehat{\bD}\,\widehat{\bz},
\end{equation}
and clearly $\mathrm{supp}(\widehat{\bz})=S$ because $\bW$ is diagonal with positive entries.

For the selection rule, note that $\widehat{\bd}_j=\bd_j/\|\bd_j\|_2$, so
\begin{equation}
\langle \br,\widehat{\bd}_j\rangle
=
\frac{\langle \br,\bd_j\rangle}{\|\bd_j\|_2}.
\end{equation}
Thus maximizing correlation with $\widehat{\bd}_j$ is exactly the same as maximizing normalized correlation, equivalently absolute cosine similarity, with $\bd_j$.

Finally, diagonal rescaling of the columns in $\bD_S$ does not change $\mathrm{span}(\bD_S)$, so the orthogonal projector onto this span is invariant under the rescaling $\bD_S \mapsto \widehat{\bD}_S$. 
Therefore, once the same index is selected, both versions of OMP produce the same least-squares fit and hence the same residual at every step. By induction over the iterations, the selected indices are identical throughout the run.
\end{proof}

\begin{definition}[Exact Recovery Coefficient (ERC) on normalized dictionary]\footnote{We use Exact Recovery Coefficient (ERC) for the quantity whose threshold at $1$ is the classical exact recovery condition in \citet{Tropp2004GreedIsGood}.}
\label{def:erc-hat}
For a support $S$ with $\widehat{\bD}_S$ full column rank, define
\begin{align}
  \mathrm{ERC}(\widehat{\bD};S)
  &:= \max_{j\in S^c} \bigl\|\,\widehat{\bD}_S^{\dagger}\,\widehat{\bd}_j\,\bigr\|_1,\\
  \mathrm{ERC}(\widehat{\bD};S\,|\,T)
  &:= \max_{j\in T\setminus S} \bigl\|\,\widehat{\bD}_S^{\dagger}\,\widehat{\bd}_j\,\bigr\|_1,
\end{align}
for any $T\supseteq S$, where $\|\cdot\|_1$ denotes the vector $\ell_1$ norm.
Here $S^c:=[k]\setminus S$, so $T\setminus S = T\cap S^c \subseteq S^c$; thus $\mathrm{ERC}(\widehat{\bD};S\,|\,T)$ is the same maximum as $\mathrm{ERC}(\widehat{\bD};S)$, but over the smaller index set $T\setminus S$.
\end{definition}

\begin{lemma}[Monotone ERC improvement under subtree restriction]
\label{lem:erc_monotone}
Let $\bD\in\R^{d\times k}$ have arbitrary nonzero column norms and let $\bz$ be $s$-sparse with support $S$.
Let $T_0 \supset T_1 \supset \cdots \supset T_L$ be a nested sequence with $T_0=[k]$ and $S\subseteq T_\ell$ for all $\ell=0,\dots,L$.
Assume $\widehat{\bD}_S$ has full column rank.
Then the ERC decreases monotonically along the restriction:
\begin{align}
  \mathrm{ERC}(\widehat{\bD};S\,|\,T_L)
  &\le \mathrm{ERC}(\widehat{\bD};S\,|\,T_{L-1})\\
  &\le \cdots \le \mathrm{ERC}(\widehat{\bD};S\,|\,T_0)\\
  &:= \mathrm{ERC}(\widehat{\bD};S).
\end{align}
\end{lemma}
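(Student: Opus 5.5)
The plan is to reduce the statement to the elementary fact that a maximum taken over a smaller index set cannot exceed the maximum taken over a larger one. By \Cref{def:erc-hat}, for each $\ell$ we have
\begin{equation*}
\mathrm{ERC}(\widehat{\bD};S\,|\,T_\ell)=\max_{j\in T_\ell\setminus S} f(j),
\qquad
f(j):=\bigl\|\widehat{\bD}_S^{\dagger}\widehat{\bd}_j\bigr\|_1 .
\end{equation*}
The key point is that the function $f$ depends only on $S$ and on the column $\widehat{\bd}_j$, and not on the restriction set $T_\ell$. This holds because $\widehat{\bD}_S$, and hence its pseudoinverse, is fixed once $S$ is fixed. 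The assumption that $\widehat{\bD}_S$ has full column rank guarantees that $\widehat{\bD}_S^{\dagger}=(\widehat{\bD}_S^\top\widehat{\bD}_S)^{-1}\widehat{\bD}_S^\top$ is well defined, so $f$ is a well-defined nonnegative function on $S^c$.

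The steps are as follows. First, from the nesting $T_\ell\subseteq T_{\ell-1}$, deduce that $T_\ell\setminus S\subseteq T_{\ell-1}\setminus S$. Second, apply the monotonicity of the maximum under inclusion to get
\begin{equation*}
\mathrm{ERC}(\widehat{\bD};S\,|\,T_\ell)\le \mathrm{ERC}(\widehat{\bD};S\,|\,T_{\ell-1})
\end{equation*}
for each $\ell=1,\dots,L$. Third, chain these inequalities. Finally, observe that $T_0=[k]$ gives $T_0\setminus S=S^c$, so the last term coincides with $\mathrm{ERC}(\widehat{\bD};S)$.

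The one point that needs care is the case where $T_\ell\setminus S$ is empty, for instance at the deepest level where only the true path remains. I will adopt the convention that a maximum over the empty set is $0$, which is consistent because $f\ge 0$; the inequalities then continue to hold.

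I will also remark, via \Cref{lem:normalization}, that the arbitrary column norms of $\bD$ play no role, since everything is stated in terms of $\widehat{\bD}$. There is no real obstacle here: the lemma is a bookkeeping statement, and its content for \Cref{prop:hier_omp_erc} comes from interpreting $T_\ell$ as the candidate sets that HB-OMP is allowed to pick from.
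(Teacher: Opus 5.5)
Your proposal is correct and follows the paper's argument: since $\widehat{\bD}_S^{\dagger}$ is fixed once $S$ is fixed, the nesting $T_\ell\subseteq T_{\ell-1}$ only shrinks the index set $T_\ell\setminus S$ over which the maximum is taken, so the ERC cannot increase. Your convention that the maximum over an empty index set equals $0$ is a detail the paper leaves implicit, and it is consistent because the maximized quantity is nonnegative.
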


\begin{proof}
By definition, $\mathrm{ERC}(\widehat{\bD};S\,|\,T)=\max_{j\in T\setminus S} \|\widehat{\bD}_S^\dagger \widehat{\bd}_j\|_1$. Since $\widehat{\bD}_S^\dagger$ is fixed, shrinking $T$ only restricts the index set over which this maximum is taken, so the value cannot increase.
\end{proof}

\begin{lemma}[ERC threshold implies restricted OMP success]
\label{lem:erc_success}
Under the assumptions of \Cref{lem:erc_monotone}, if $\mathrm{ERC}(\widehat{\bD};S\,|\,T_L)<1$, then OMP run on $\bD$ with the normalized selection rule
\begin{equation}
  j^* \in \arg\max_j \frac{|\langle \br,\bd_j\rangle|}{\|\bd_j\|_2}
  \,=\, \arg\max_j \frac{|\langle \br,\bd_j\rangle|}{\|\bd_j\|_2\,\|\br\|_2},
\end{equation}
restricted to $T_L$, recovers $S$ in $s$ steps. Equivalently, OMP on $\widehat{\bD}_{T_L}$ with the standard rule succeeds in $s$ iterations.
\end{lemma}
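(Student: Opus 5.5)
The plan is to reduce the statement to Tropp's classical exact recovery theorem for OMP (\citet{Tropp2004GreedIsGood}), applied to the restricted, column-normalized dictionary $\widehat{\bD}_{T_L}$.

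\textbf{Step 1: reduce to the normalized, restricted dictionary.} By \Cref{lem:normalization}, OMP on $\bD$ with the normalized selection rule picks the same index at every iteration as standard OMP on $\widehat{\bD}$, with the same residuals. Moreover $\bx=\widehat{\bD}\widehat{\bz}$ with $\mathrm{supp}(\widehat{\bz})=S$. Restricting the candidate set to $T_L$ is the same as running standard OMP on the submatrix $\widehat{\bD}_{T_L}$. Since $S\subseteq T_L$, the signal $\bx$ is still an exact $s$-sparse combination of columns of $\widehat{\bD}_{T_L}$. Because the ERC of $S$ relative to this submatrix is exactly $\mathrm{ERC}(\widehat{\bD};S\,|\,T_L)$, the claim reduces to standard OMP on a unit-norm dictionary whose off-support index set is $T_L\setminus S$.

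\textbf{Step 2: induction on the iterations.} The induction hypothesis is that after $t<s$ steps the selected set $S_t$ satisfies $S_t\subseteq S$. Then the residual $\br^{(t)}=\bx-P_{S_t}\bx$ lies in $\mathrm{range}(\widehat{\bD}_S)$, because both $\bx$ and $P_{S_t}\bx$ do. The residual is also nonzero: otherwise $\bx\in\mathrm{span}(\widehat{\bD}_{S_t})$ with $|S_t|<s$, which contradicts full column rank of $\widehat{\bD}_S$ together with exact support $S$ (all coefficients on $S$ nonzero).

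For any $j\in T_L\setminus S$, use $\widehat{\bD}_S\widehat{\bD}_S^{\dagger}\br=\br$ and the symmetry of the projector to write
\[
\langle \br,\widehat{\bd}_j\rangle=\langle \widehat{\bD}_S^{\top}\br,\widehat{\bD}_S^{\dagger}\widehat{\bd}_j\rangle .
\]
Hölder's inequality then gives
\[
|\langle \br,\widehat{\bd}_j\rangle|\le \|\widehat{\bD}_S^{\top}\br\|_\infty\,\|\widehat{\bD}_S^{\dagger}\widehat{\bd}_j\|_1 .
\]
Combining the two facts,
\[
\max_{j\in T_L\setminus S}|\langle \br,\widehat{\bd}_j\rangle|\le \mathrm{ERC}(\widehat{\bD};S\,|\,T_L)\,\|\widehat{\bD}_S^{\top}\br\|_\infty<\|\widehat{\bD}_S^{\top}\br\|_\infty .
\]
The last inequality is strict provided $\|\widehat{\bD}_S^{\top}\br\|_\infty>0$. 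This holds because $\br\neq 0$ lies in $\mathrm{range}(\widehat{\bD}_S)$, so it cannot be orthogonal to every column of $\widehat{\bD}_S$. Hence the greedy choice is in $S$. It is also not in $S_t$, since the residual is orthogonal to the already-selected columns. So $S_{t+1}\subseteq S$ and $|S_{t+1}|=t+1$.

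\textbf{Step 3: conclude.} After $s$ steps we have $S_s=S$, and the least-squares residual is zero. Transferring back through \Cref{lem:normalization} gives the statement for $\bD$.

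\textbf{Main obstacle.} The delicate point is the strictness and non-degeneracy in Step 2: I need $\br\neq0$ and $\br\in\mathrm{range}(\widehat{\bD}_S)$ at every step, and I need that an already-selected atom is never reselected. Both follow from the orthogonal-projection update and the full-rank assumption. Ties between in-support and off-support indices are excluded by the strict inequality, so any $\arg\max$ tie-breaking rule works.
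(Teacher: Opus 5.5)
Your proposal is correct and takes the same route as the paper: reduce via \Cref{lem:normalization} to standard OMP on the restricted normalized dictionary $\widehat{\bD}_{T_L}$, then apply the noiseless ERC argument of \citet{Tropp2004GreedIsGood}. The only difference is that you re-derive Tropp's induction step where the paper simply cites the theorem; your derivation uses $\br\in\mathrm{range}(\widehat{\bD}_S)$, the H\"older bound, and the nonvanishing residual, which makes explicit the nondegeneracy points the paper leaves implicit.
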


\begin{proof}
\Cref{lem:normalization} shows that the normalized-selection rule on $\bD$ matches standard OMP on $\widehat{\bD}$. The classical noiseless ERC theorem of \citet{Tropp2004GreedIsGood} applied to the restricted dictionary $\widehat{\bD}_{T_L}$ then yields exact support recovery in $s$ iterations whenever $\max_{j\in T_L\setminus S} \|\widehat{\bD}_S^\dagger \widehat{\bd}_j\|_1 < 1$, equivalently whenever $\mathrm{ERC}(\widehat{\bD};S\,|\,T_L)<1$.
\end{proof}

\subsection{Proof of \Cref{prop:hier_omp_erc}}\label{app:proof-hier-omp-erc}
\noindent\textbf{Statement.}\quad
There exist instances with $\mathrm{ERC}(\widehat{\bD};S)\ge 1$ yet $\mathrm{ERC}(\widehat{\bD};S\,|\,T_L)<1$ for some nested $T_0 \supset T_1 \supset \cdots \supset T_L$ satisfying the right-subtree assumption $S\subseteq T_\ell$.
Consequently, hierarchical OMP yields a strictly larger ERC-certified success region than global OMP on the full dictionary.

\begin{proof}
Choose any instance for which the maximizer of $\mathrm{ERC}(\widehat{\bD};S)$ lies outside $T_L$. Removing that maximizer from the admissible index set strictly decreases the maximum, so
\[
\mathrm{ERC}(\widehat{\bD};S\,|\,T_L)
<
\mathrm{ERC}(\widehat{\bD};S).
\]
Hence it is possible to have
\[
\mathrm{ERC}(\widehat{\bD};S)\ge 1
\qquad\text{but}\qquad
\mathrm{ERC}(\widehat{\bD};S\,|\,T_L)<1.
\]
Whenever this occurs, \Cref{lem:erc_success} certifies exact recovery for Hierarchical OMP on the restricted set $T_L$, while the standard ERC guarantee for global OMP on the full dictionary does not apply.
\end{proof}

\section{Step-by-step construction of a Hierarchical Concept Embedding}\label{app:step_by_step_construction}

Assume we are at depth $l\!>\!0$ of the hierarchy. The path from the root to the \emph{current parent} $\pi_l$ (with embedding $\ba^{(\pi_l)}\in\R^{d}$) consists of the $l\!+\!1$ ancestor vectors
\begin{align}
  \bA_l = \{\ba^{(\pi_0)},\ba^{(\pi_1)},\dots,\ba^{(\pi_l)}\},
  \qquad \pi_0 < \pi_1 < \dots < \pi_l.
\end{align}
We seek embeddings $\{\ba^{(j)}\}_{j=1}^{b}\subset\R^{d}$ for its $b$ children that satisfy the following conditions:

\begin{enumerate}[label=(\roman*)]
  \item \textbf{Hierarchical Orthogonality}:
        \begin{align}\label{eq:multi_ortho}
          (\ba^{(j)} - \ba^{(\pi_k)})^{\!\top}\ba^{(\pi_k)} = 0,\quad
          &k = 0,\dots,l,
          \notag \\ &j = 1,\dots,b.
        \end{align}
        By induction, the current parent node $\pi_l$ already satisfies these orthogonality constraints with respect to its ancestors.
  \item \textbf{Regular $(b\!-\!1)$-simplex structure}:
        \begin{equation}\label{eq:multi_simplex_ip}
          (\ba^{(j)}-\bg_l)^{\!\top}(\ba^{(k)}-\bg_l)
          =
          \begin{cases}
            \lambda_l^{2}, & j = k,\\[4pt]
            -\dfrac{\lambda_l^{2}}{b-1}, & j \neq k,
          \end{cases}
        \end{equation}
        where $\bg_l$ is any point satisfying all $l\!+\!1$ equations in Eq.~\eqref{eq:multi_ortho}, and $\lambda_l$ scales the simplex so that $\angle(\ba^{(j)}, \ba^{(\pi_l)}) = \theta_l$.
  \item \textbf{Cone condition w.r.t.\ the current parent}:
        \begin{align}\label{eq:multi_cone}
          \begin{split}
          \angle(\ba^{(j)},\ba^{(\pi_l)}) = \theta_l
          \\
          \Longleftrightarrow
          \|\ba^{(j)}-\ba^{(\pi_l)}\| = \|\ba^{(\pi_l)}\|\tan\theta_l,\\
          \qquad j = 1,\dots,b.
          \end{split}
        \end{align}
        This equivalence holds because Eq.~\eqref{eq:multi_ortho} implies $\langle \ba^{(j)} - \ba^{(\pi_l)}, \ba^{(\pi_l)} \rangle = 0$.
\end{enumerate}

\subsection{Feasible Subspace Induced by Hierarchical Orthogonality}

Let
\begin{equation}
  \bA_l = \begin{bmatrix} \ba^{(\pi_0)} & \ba^{(\pi_1)} & \dots & \ba^{(\pi_l)} \end{bmatrix} \in \R^{d\times(l+1)}.
\end{equation}

The $l\!+\!1$ hyperplanes in~\eqref{eq:multi_ortho} intersect in the affine subspace
\begin{equation}\label{eq:Vl_def}
  \cV_l = \{\bx\in\R^{d}: \bA_l^{\top}\bx = \bh_l\},
\end{equation}
where $\bh_l = [\|\ba^{(\pi_0)}\|^{2},\dots,\|\ba^{(\pi_l)}\|^{2}]^{\top}$.
If the ancestor columns of $\bA_l$ are linearly independent\footnote{This is typical because every level adds a new non-collinear vector.},
then
\begin{equation}
  \dim\cV_l \;=\; d - (l+1).
\end{equation}
To be able to embed a regular $(b\!-\!1)$-simplex for all depths we therefore require the \emph{depth-dimension condition}
\begin{equation}\label{eq:depth_dim_cond}
  d \;\ge\; L + b.
\end{equation}
Equation~\eqref{eq:depth_dim_cond} quantifies the depth–dimension trade-off: one ambient degree of freedom is lost per additional ancestor constraint, while $(b\!-\!1)$ directions are always needed to accommodate the regular simplex of conditionally independent children.

We now give a constructive procedure for the children of node $\pi_l$ at level $l$.

\begin{enumerate}
  \item \textbf{Find one feasible origin.}
    Solve the linear system $\bA_l^{\!\top}\bx=\bh_l$ to obtain any particular solution $\bg_l\in\cV_l$. If we also impose the cone half-angle condition, the feasible set can be further restricted to the intersection of $\cV_l$ with the cone centered at the parent embedding $\ba^{(\pi_l)}$ and half-angle $\theta_l$ from \Cref{sec:generative_model}.
    
    A convenient choice, which preserves the full half-angle budget, is to take the current parent embedding itself as the feasible origin, i.e., $\bg_l = \ba^{(\pi_l)}$. By construction, every node is orthogonal to all of its ancestors. Hence, for each $k \in \{0,\dots,l\}$,
    \begin{align}
      (\ba^{(\pi_l)} - \ba^{(\pi_k)})^{\top}\ba^{(\pi_k)} = 0
      \\ 
      \Longrightarrow\;
      \ba^{(\pi_l)\top}\ba^{(\pi_k)} = \|\ba^{(\pi_k)}\|^{2}.
    \end{align}

  \item \textbf{Basis for the difference linear space.}
    Compute an orthonormal basis
    \begin{align}
      \bU_l &\in \R^{d\times(d-l-1)},\\
      \bA_l^{\!\top}\bU_l &= \mathbf0,\\
      \bU_l^{\!\top}\bU_l &= I_{\,d-l-1},
    \end{align}
    e.g., by taking the $d-(l+1)$ left singular vectors of~$\bA_l$ associated with the smallest singular values, denoted $\bU_{l+2:d}$.
  \item \textbf{Canonical regular simplex in $\R^{b-1}$.}
    Use the centred construction
    \(
      	\tilde{\bd}_i = \be_i - \frac1b\mathbf1,\;
      i=1,\dots,b
    \)
    (cf.\ Eq.~\eqref{eq:simplex_definition}).
  \item \textbf{Scale $\tilde{\bd}_j$ to satisfy the cone condition.}
    \begin{align}
      \lambda_l &= \|\ba^{(\pi_l)}\|\tan\theta_l,\\
      \bd_j &= \lambda_l\,\tilde{\bd}_j.
    \end{align}
  \item \textbf{Embed and translate.}
    \begin{equation}
      \ba^{(j)}
      \;=\;
      \ba^{(\pi_l)} \;+\; \bU_{l+2:d}\,\bd_j,\qquad j = 1,\dots,b.
    \end{equation}
\end{enumerate}

Choosing the scale factor
\begin{equation}\label{eq:lambda_choice}
  \lambda_l
  \;=\;
  \|\ba^{(\pi_l)}\|\tan\theta_l,
\end{equation}
forces
\(\|\ba^{(j)}-\ba^{(\pi_l)}\| = \|\ba^{(\pi_l)}\|\tan\theta_l\) for all $j$, and therefore
\(
  \angle\bigl(\ba^{(j)},\ba^{(\pi_l)}\bigr)=\theta_l
\), which is precisely the requirement
in~Eq.\,\eqref{eq:multi_cone}.

\section{Additional Experimental Results}\label{app:additional_experiments}

\begin{figure*}[ht]
  \centering
  \includegraphics[width=\textwidth]{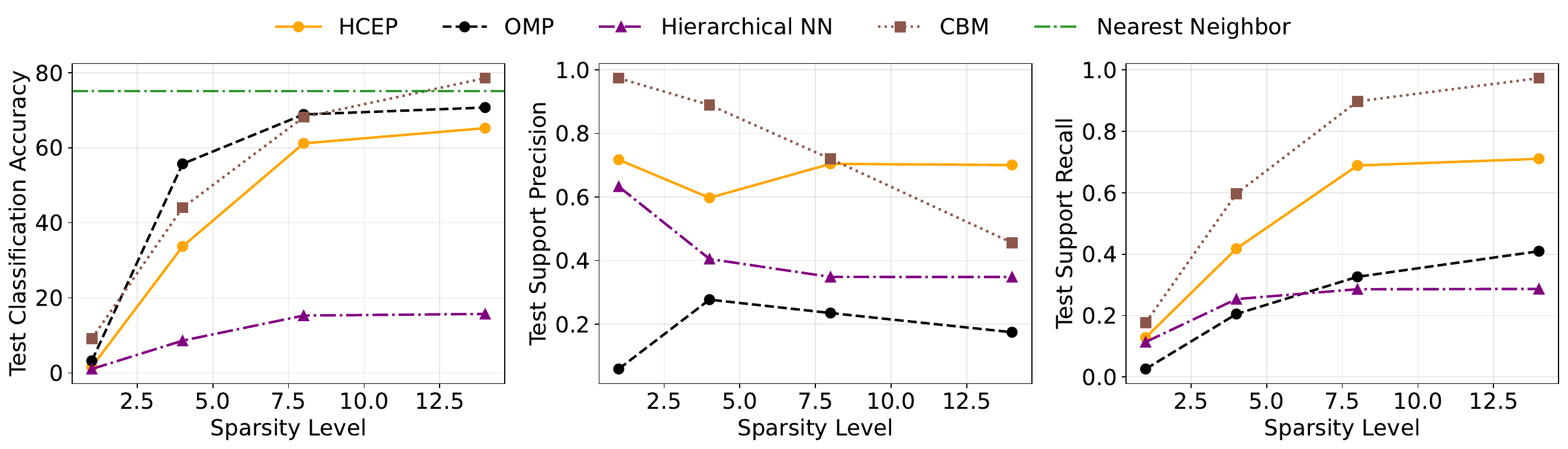}
  \caption{On ImageNet, \ours{} achieves competitive accuracy while having higher concept precision/recall than sparse concept prediction baselines (OMP, Hierarchical NN).}
  \label{fig:imagenet_results}
\end{figure*}

\begin{figure*}[ht]
  \centering
  \includegraphics[width=\textwidth]{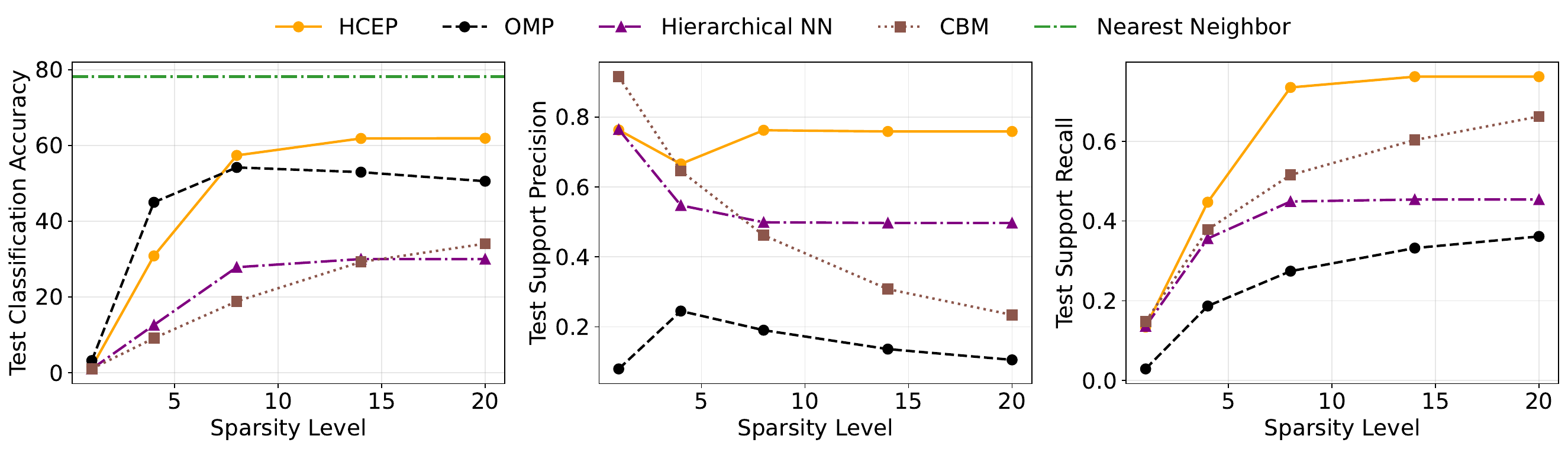}
  \caption{Interpretable image classification on ImageNet (12-shot) using SigLIP \citep{zhai2023sigmoid} embeddings. \ours{} exhibits similar improvements in support precision and recall over baselines as with CLIP (cf.\ \Cref{fig:imagenet_results}), demonstrating generality across vision-language models.}
  \label{fig:imagenet_siglip_results}
\end{figure*}

\subsection{Additional Synthetic Experiment Details}\label{app:synthetic_experiment_details}
Branching ratio $b=3$, hierarchy depth $L=7$, dimension $d=50$.
Initial cone half angle = 85 degrees. Initial vector norm = 0.8.
Geometric reduction factor = 0.4.
Total leaf nodes = 2187. Total nodes = number of atoms = 3280.
Gaussian noise for each leaf for data generation $\sigma^2=10^{-5}$.
Generate 5 samples per leaf for a total of 10,935 samples.

\subsection{Text Interpretation of Synset Differences}\label{sec:text_interpretation}
Optionally, to qualitatively evaluate alternative textual meanings of the synset differences, which form the atoms in our hierarchical dictionary, we use CLIP text embeddings and GPT-5 \cite{openai_gpt5}. First, for each parent-child pair, we generate a text description of the difference between the parent and child synsets using GPT-5. Then, we pool the text embeddings of these descriptions to form a set of candidate concept embeddings. Next, for each synset difference vector, we find its top-$k$ neighbors among the candidate concept embeddings. Finally, we use GPT-5 to generate a summary of the top-$k$ neighbor descriptions, which helps interpret the synset difference. \Cref{tab:synset_difference_interpretations} shows example interpretations for several parent-child pairs in WordNet \citep{miller_wordnet_1995}, the hierarchy underlying ImageNet.

\begin{table}[ht]
  \centering
  \caption{Example text interpretations of child--parent synset differences produced via CLIP embeddings and GPT summarization.}
  \label{tab:synset_difference_interpretations}
  \resizebox{\linewidth}{!}{%
  \begin{tabular}{l p{0.5\linewidth}}
    \toprule
    Parent$\rightarrow$Child Pair & Text Interpretation \\
    \midrule
    \texttt{bear} $\rightarrow$ \texttt{polar bear} & thick matte white fur blending with snow. \\
    \texttt{container} $\rightarrow$ \texttt{basket} & open-top woven or perforated sides with handles. \\
    \texttt{structure} $\rightarrow$ \texttt{lumbermill} & vertical log-sawing machines and plank conveyors. \\
    \texttt{citrus} $\rightarrow$ \texttt{orange} & round, bright orange, pebbled rind. \\
    \bottomrule
  \end{tabular}%
  }
  \vspace{-0.55em}
\end{table}

\subsection{Additional Real-Data Experiment Details}\label{app:real_data_experiment_details}

\myparagraph{Model Architecture and Training Details}
We use CLIP-ViT-L/14 as the backbone. To train the linear classifier, we use the AdamW optimizer \citep{loshchilov2017decoupled} with weight decay $10^{-4}$ and learning rate $10^{-1}$. To train the CBM, we use the Adam optimizer with learning rate $10^{-1}$ for $500$ epochs. We provide the detailed hyperparameters in \Cref{tab:hyperparameters}.

\myparagraph{Synset Difference Interpretations}
We use CLIP text embeddings \citep{radford_learning_2021} and GPT-5 \citep{openai_gpt5} to generate textual interpretations of the synset differences. We provide the top-10 concepts for each parent-child pair in \Cref{tab:synset_difference_interpretations_with_concepts}. We also include the GPT-5 prompt in \Cref{tab:residual_phrase_generation_prompt}.

\myparagraph{Ablation Study on Beam Size}
We perform an ablation study on the beam size for Hierarchical OMP. We vary the beam size from 1 to 8 and evaluate concept recovery accuracy on ImageNette. We report the results in \Cref{fig:imagenette_results_ablation_study}.

\myparagraph{Taxonomy Generation Prompt}
We use the taxonomy generation prompt from \citet{zeng2024chain} in \Cref{tab:taxonomy_generation_prompt} to generate the taxonomy for CIFAR-100.

\begin{table}[ht]
  \centering
  \caption{Key hyperparameters used in experiments for each dataset.}
  \label{tab:hyperparameters}
  \resizebox{0.95\linewidth}{!}{%
  \begin{tabular}{lccc}
    \toprule
    \textbf{Hyperparameter} & \textbf{ImageNette} & \textbf{CIFAR-100} & \textbf{ImageNet} \\
    \midrule
    Batch size              & 4096                  & 4096               & 16384                   \\
    Classification training epochs   & 500                   & 500                & 1000                       \\
    \ours{} beam size & 8 & 16 & 32 \\
    \bottomrule
  \end{tabular}
  }
\end{table}

\begin{figure*}[t]
  \centering
  \includegraphics[width=\textwidth]{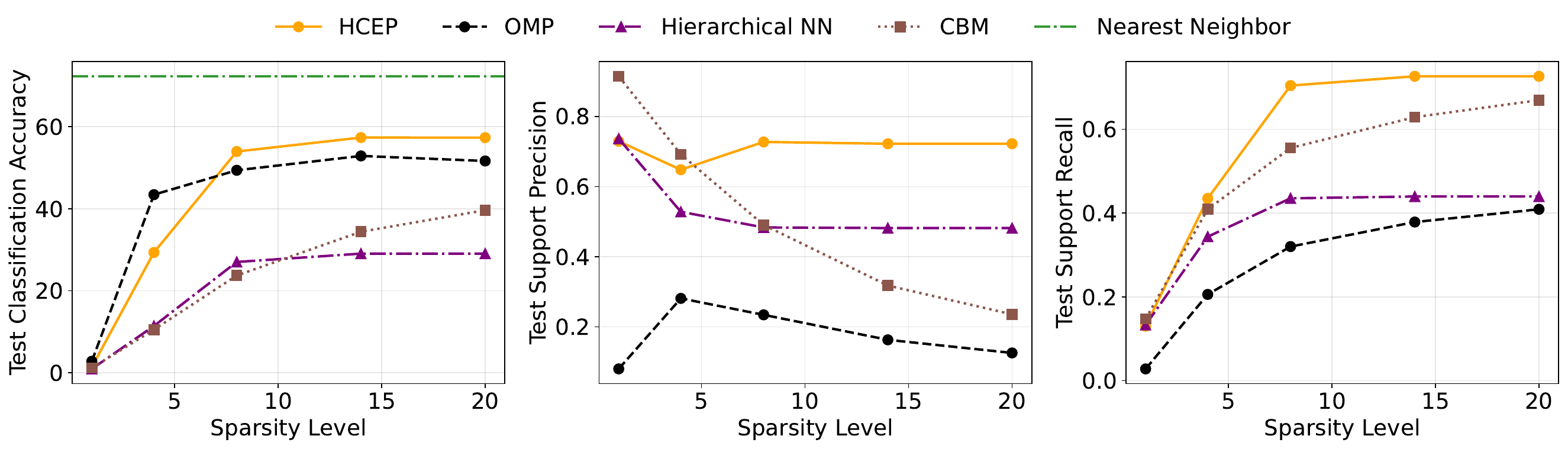}
  \caption{When we restrict the ImageNet training set to 25 images per class, \ours{} outperforms all interpretable baselines.}
  \label{fig:imagenet_results_25_shot}
\end{figure*}

\begin{table*}[ht]
  \centering
  \caption{Example text interpretations of child--parent synset differences produced via CLIP embeddings and GPT summarization, with the top-10 contributing concepts.}
  \label{tab:synset_difference_interpretations_with_concepts}
  \resizebox{\linewidth}{!}{%
  \begin{tabular}{l p{0.35\linewidth} p{0.35\linewidth}}
    \toprule
    Parent$\rightarrow$Child Pair & Top-10 Concepts & Text Interpretation \\
    \midrule
    \texttt{bear} $\rightarrow$ \texttt{polar bear} &
    \begin{enumerate}[leftmargin=*, nosep]
      \item Matte white texture
      \item Thick white winter fur coat
      \item White plumage in winter season
      \item White fur with cream patches
      \item Long, silky white coat
      \item White fur blending with surroundings
      \item Pure white fluffy coat
      \item White coat with lemon markings
      \item White cue ball
      \item Long, corded white coat
    \end{enumerate} &
    thick matte white fur blending with snow. \\
    \midrule
    \texttt{container} $\rightarrow$ \texttt{basket} &
    \begin{enumerate}[leftmargin=*, nosep]
      \item Wicker baskets filled with baguettes
      \item Rectangular shopping baskets
      \item Stacked woven baskets
      \item Rear storage basket
      \item Rectangular open-top basket
      \item Woven rattan backrest
      \item Plastic shopping baskets
      \item Perforated cutlery basket in lower rack
      \item Woven rattan seating surfaces
      \item Rectangular basket frame
    \end{enumerate} &
    open-top woven or perforated sides with handles. \\
    \midrule
    \texttt{structure} $\rightarrow$ \texttt{lumbermill} &
    \begin{enumerate}[leftmargin=*, nosep]
      \item Vertical log slicing machines
      \item Massive log cutting machines
      \item Metallic sawmill machinery
      \item Heavy-duty sawmill frames
      \item Conveyor belts with wood pieces
      \item Wooden board sorting systems
      \item Exposed wooden axles
      \item Wooden log feeding chutes
      \item Stacks of cut wooden planks
      \item Narrow wooden steering wheel
    \end{enumerate} &
    vertical log-sawing machines and plank conveyors. \\
    \bottomrule
  \end{tabular}%
  }
  \vspace{-0.55em}
\end{table*}

\begin{table*}[t]
\caption{LLM prompt used for taxonomy generation from root and leaf concepts.}
\label{tab:taxonomy_generation_prompt}
\begin{tcolorbox}[
  enhanced,
  colback=blue!5!white,
  colframe=blue!50!black,
  title={\large\bfseries Taxonomy Generation Prompt},
  fonttitle=\bfseries,
  top=10pt,
  bottom=10pt,
  left=10pt,
  right=10pt
]

Given root concept \texttt{<root>} and leaf concepts \texttt{<leaves>}, generate a detailed hierarchical taxonomy that organizes these leaves under the root. Create multiple levels of intermediate category hierarchies to build a rich, fine-grained classification structure. Use as many hierarchical levels as needed to create meaningful semantic groupings and subgroupings. 

The format is: 
1. Parent Concept 1.1 Child Concept 1.1.1 Grandchild Concept. 

CRITICAL: Every single leaf concept from the list must appear exactly as given in the taxonomy as the deepest level nodes. You may and should add multiple levels of intermediate concepts but do not add new leaf concepts. Before finishing, verify that each leaf concept from \texttt{<leaves>} appears in your taxonomy. Aim for depth and semantic richness in the hierarchy.

\end{tcolorbox}
\end{table*}

\begin{table*}[t]
\caption{LLM prompt used to generate child-vs-parent residual phrases.}
\label{tab:residual_phrase_generation_prompt}
\begin{tcolorbox}[
  enhanced,
  colback=blue!5!white,
  colframe=blue!50!black,
  title={\large\bfseries Residual Concept Generation Prompt},
  fonttitle=\bfseries,
  top=10pt,
  bottom=10pt,
  left=10pt,
  right=10pt
]
TASK: Generate a concise phrase (3-10 words) that describes what distinguishes "\texttt{<child>}" from its parent category "\texttt{<parent>}".
This is for a hierarchical sparse model. A residual represents the visual difference between a child and parent category.
Most correlated visual concepts (from CLIP embeddings): \texttt{<concepts\_string>}
REQUIREMENTS:

1. Generate ONE short phrase (3-10 words) that captures the key distinguishing features

2. Base your phrase on the correlated concepts provided above

3. Focus on the most salient visual features

4. Be specific and concrete, not vague or generic

5. Use natural language that a human would use to describe the difference

6. IMPORTANT: Do NOT use the synset names ("\texttt{<parent>}" or "\texttt{<child>}") in your phrase

7. IMPORTANT: Describe only the visual features, not the category name

EXAMPLES OF GOOD PHRASES:
- "tawny coat with distinctive facial markings" (for a specific dog breed)
- "long curved neck and pink coloration" (for flamingo vs bird)
- "striped pattern and elongated body" (for a specific fish)
- "metallic surface with cylindrical shape" (for a lighter)
\end{tcolorbox}
\end{table*}

\begin{figure*}[t]
  \centering
  \includegraphics[width=\textwidth]{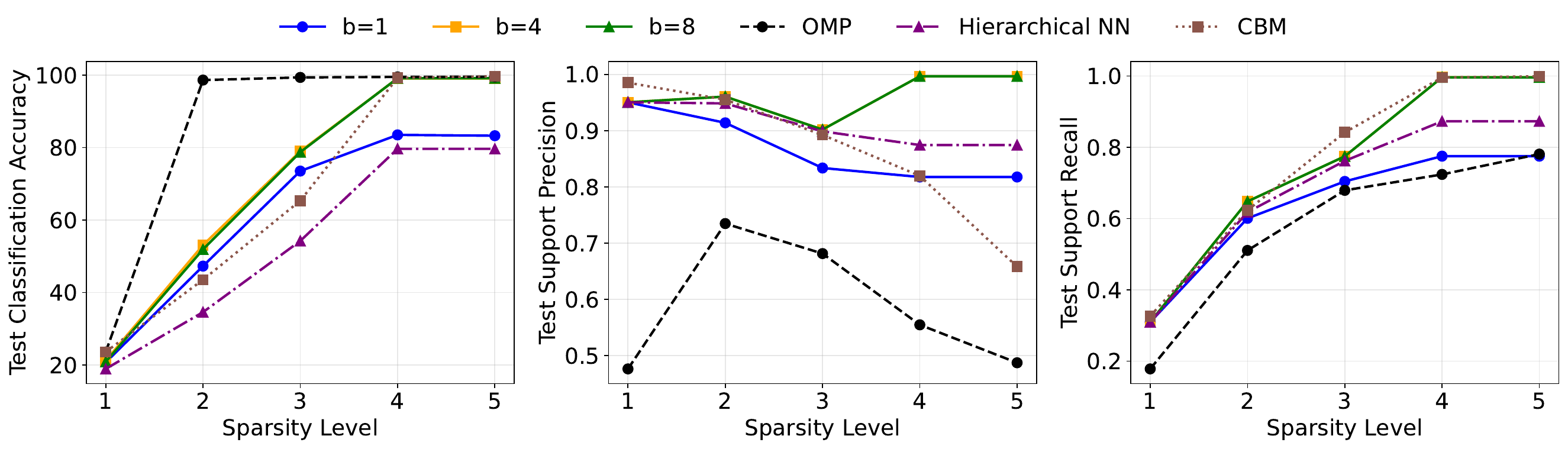}
  \caption{We vary the beam size over \{1, 4, 8\} and evaluate on ImageNette.}
  \label{fig:imagenette_results_ablation_study}
\end{figure*}

\begin{figure*}[t]
  \centering
  \includegraphics[width=0.7\linewidth]{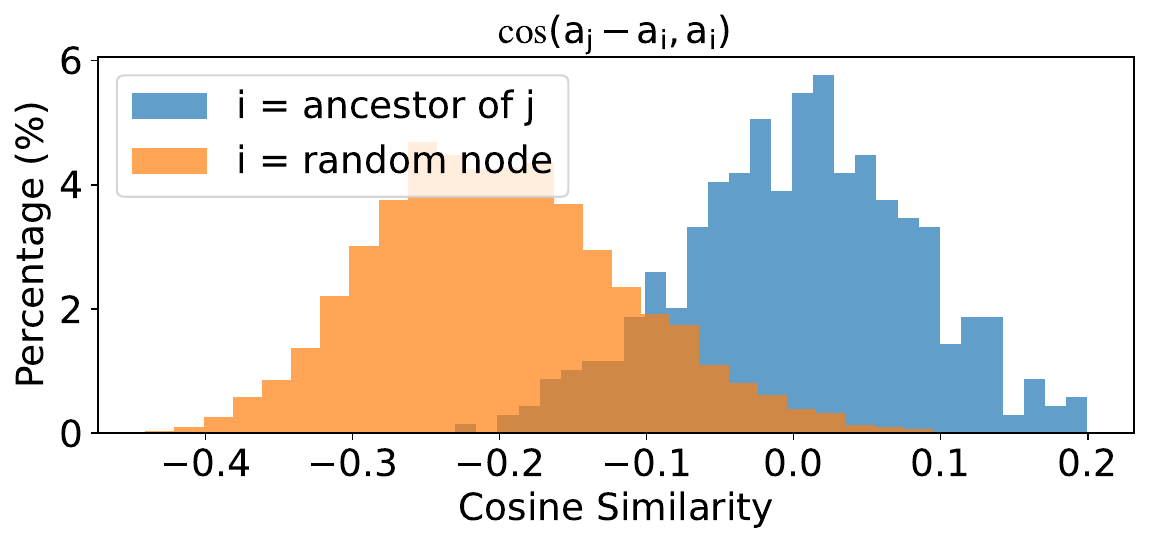}
  \caption{Observed hierarchical orthogonality on CIFAR-100. The cosine similarity between child-parent difference vectors and their parents is close to zero, while random non-parent pairs have significantly lower cosine similarity.}
  \label{fig:orthogonality_test_cifar100}
\end{figure*}

\end{document}